\documentclass{article} 
\usepackage{iclr2026_conference,times}


\usepackage{amsmath,amsfonts,bm}
\usepackage{adjustbox}




\def\Figref#1{Figure~\ref{#1}}





\def\eqref#1{equation~\ref{#1}}









\def\1{\bm{1}}










\DeclareMathAlphabet{\mathsfit}{\encodingdefault}{\sfdefault}{m}{sl}
\SetMathAlphabet{\mathsfit}{bold}{\encodingdefault}{\sfdefault}{bx}{n}













\usepackage[utf8]{inputenc} 
\usepackage[T1]{fontenc}    
\usepackage{hyperref}       
\usepackage{url}            
\usepackage{booktabs}       
\usepackage{amsfonts}       
\usepackage{nicefrac}       
\usepackage{microtype}      
\usepackage{xcolor}         
\usepackage{tabularx}
\usepackage{titletoc}
\usepackage{bookmark}

\DeclareUnicodeCharacter{202F}{\,}
\usepackage{amsmath}
\usepackage{graphicx}
\usepackage{multirow}
\usepackage{subcaption}
\usepackage{tabularx}
\usepackage{wrapfig}
\usepackage{amsthm}
\usepackage{amssymb}
\usepackage{CJKutf8}

\usepackage{fancyvrb} 
\definecolor{qinglan}{RGB}{0,153,204}
\definecolor{myBlue}{RGB}{31,119,180}
\definecolor{myGreen}{RGB}{44,160, 44}
\definecolor{customGreen}{RGB}{0,176,80}
\definecolor{customBlue}{RGB}{73,198,243}
\definecolor{deepGreen}{RGB}{0,200,0}
\usepackage{etoc}
\etocsetnexttocdepth{subsubsection}
\usepackage{xspace}
\usepackage{enumitem}

\usepackage{amsmath}
\usepackage{amssymb}
\usepackage{mathtools}
\usepackage{amsthm}

\usepackage{tikz}
\usepackage{lipsum} 

\usepackage[capitalize, noabbrev]{cleveref}

\theoremstyle{plain}

\newtheorem{theorem}{Theorem}[section]

\theoremstyle{definition}
\newtheorem{definition}[theorem]{Definition}
\newtheorem{assumption}[theorem]{Assumption}
\theoremstyle{remark}

\usepackage{thm-restate}

\newcommand{\MethodName}{\textsc{QuestA}\xspace}

\usepackage{tcolorbox}
\tcbuselibrary{breakable}
\newtcolorbox{promptbox}{colback=gray!10!white, colframe=gray!50!black, boxrule=0.5pt, arc=2pt, left=2pt, right=2pt, top=2pt, bottom=2pt, breakable, width=\linewidth}
\newtcolorbox{yellowbox}{colback=yellow!10!white, colframe=yellow!50!black, boxrule=0.5pt, arc=2pt, left=2pt, right=2pt, top=2pt, bottom=2pt, breakable, width=\linewidth}
\newtcolorbox{bluebox}{colback=blue!10!white, colframe=blue!50!black, boxrule=0.5pt, arc=2pt, left=2pt, right=2pt, top=2pt, bottom=2pt, breakable, width=\linewidth}
\newtcolorbox{greenbox}{colback=green!10!white, colframe=green!50!black, boxrule=0.5pt, arc=2pt, left=2pt, right=2pt, top=2pt, bottom=2pt, breakable, width=0.866\linewidth}

\title{QuestA: Expanding Reasoning Capacity in LLMs via Question Augmentation}


\author{\textbf{Jiazheng Li}\thanks{Equal Contribution}\ \ \thanks{Contact: Jiazheng at foreverlasting1202@outlook.com, and Jingzhao at jingzhaoz@mail.tsinghua.edu.cn}\ \ $^{1,2}$, 
    \textbf{Hongzhou Lin}$^{*}$\thanks{This work is independent of and outside of the work at Amazon.}\ \ $^{3}$, 
    \textbf{Hong Lu}$^{1,2}$, 
    \textbf{Kaiyue Wen}$^{4}$, 
    \textbf{Zaiwen Yang}$^{1}$,\\
    \textbf{Jiaxuan Gao}$^{1}$, 
    \textbf{Yi Wu}$^{1,2}$, 
    \textbf{Jingzhao Zhang}$^{1,2}$\\
$^1$Tsinghua University, $^2$Shanghai Qi Zhi Institute, $^3$Amazon, $^4$Stanford University
}

%

\iclrfinalcopy 
\begin{document}

\maketitle

\begin{abstract}

Reinforcement learning (RL) has emerged as a central paradigm for training large language models (LLMs) in reasoning tasks. Yet recent studies~\citep{yue2025does, liu2025prorl} question RL’s ability to incentivize reasoning capacity beyond the base model. This raises a key challenge: how can RL be adapted to solve harder reasoning problems more effectively?
To address this challenge, we propose a simple yet effective strategy via \emph{Question Augmentation}: introduce partial solutions during training to reduce problem difficulty and provide more informative learning signals. 
Our method, QuestA, when applied during RL training on math reasoning tasks,  not only improves pass@1 but also pass@k—particularly on problems where standard RL struggles to make progress. 
This enables continual improvement over strong open-source models such as \textsc{DeepScaleR} and \textsc{OpenMath Nemotron}, further enhancing their reasoning capabilities. We achieve new state-of-the-art results on math benchmarks using 1.5B-parameter models: 72.50\% ({\color{green}+10.73\%}) on AIME24, 62.29\% ({\color{green}+12.79\%}) on AIME25, and 41.67\% ({\color{green}+10.11\%}) on HMMT25. Code, data and model are available at \url{https://github.com/foreverlasting1202/QuestA}.
\end{abstract}

\begin{figure}[!h]
	\centering
    \vspace{-0.5cm}
    \includegraphics[width=0.73\textwidth]{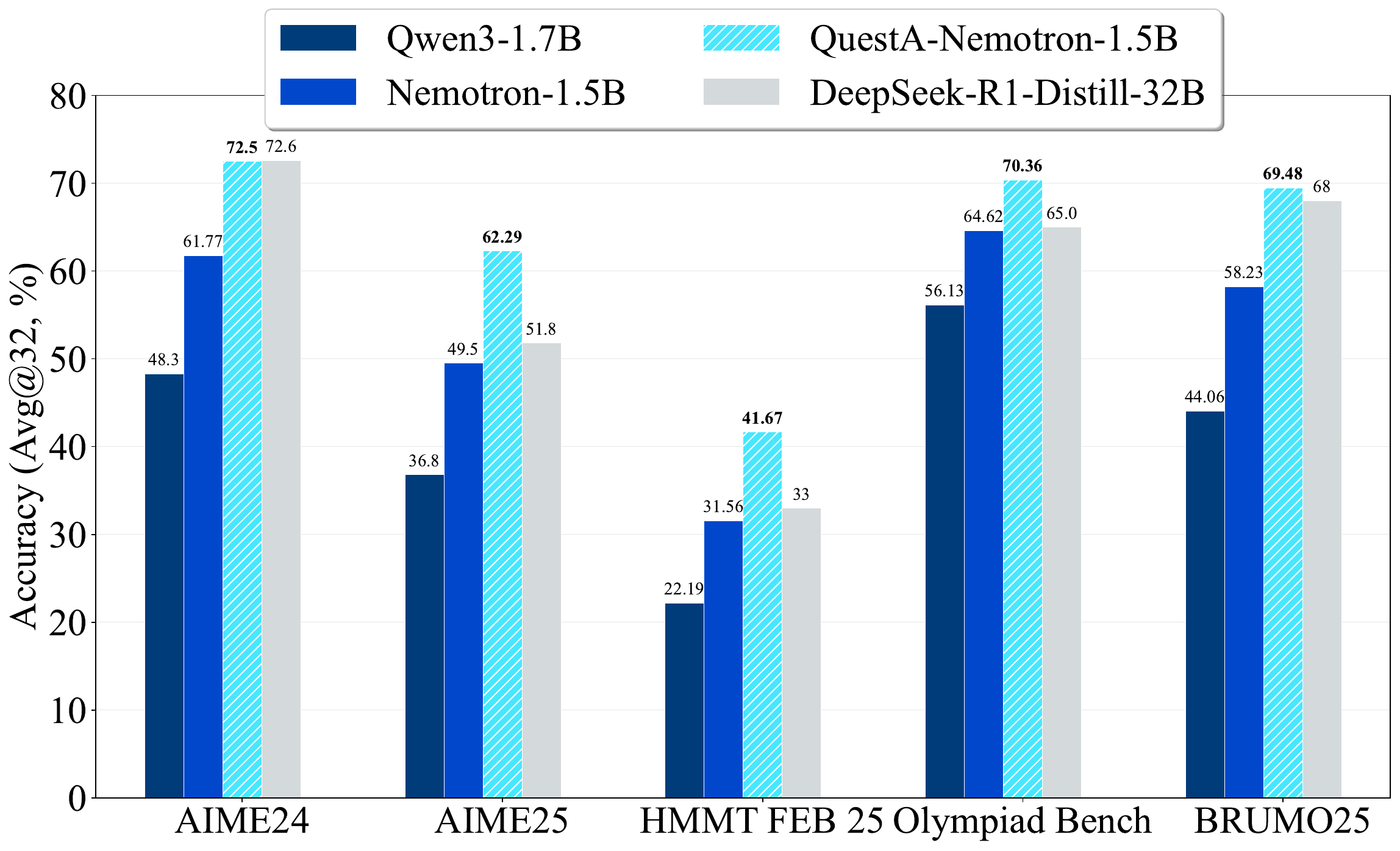}
	\caption{\MethodName is a data augmentation method that injects partial solutions to effectively scaffold RL training on hard reasoning problems. We construct 26K high-quality augmented prompts from challenging instances in OpenR1~\citep{openr1_math220k}, and fine-tune models using 32K-context-length RL. When applied to \texttt{Nemotron-1.5B}, \MethodName delivers substantial performance gains—achieving new state-of-the-art results across all math benchmarks for 1.5B-parameter models.
    }
\label{fig:benchmark}
\end{figure}


\section{Introduction}\label{sec: intro}

Frontier large language models (LLMs), including OpenAI-O1, O3~\citep{jaech2024openai}, DeepSeek-R1~\citep{guo2025deepseek}, Qwen3~\citep{yang2025qwen3}, and Gemini 2.5~\citep{gemini25_report}, have exhibited exceptional performance on high-complexity reasoning tasks spanning mathematics, programming, and formal logic. 
Recent advances in the field have increasingly prioritized reinforcement learning paradigms (RL), among which \emph{Reinforcement Learning with Verifiable Rewards} (RLVR) has emerged as a scalable and efficient approach to enhancing reasoning capabilities. 
Using automatically verifiable signals, 
RLVR enables alignment between model output and objective correctness, thus addressing a critical limitation of traditional RL for reasoning. 

However, the community remains divided on a fundamental question regarding RLVR: does it expand the model’s intrinsic reasoning capacity, or merely exploit pre-existing knowledge encoded in the base model? Recent research~\citep{yue2025does, liu2025prorl,zhao2025echochamberrlposttraining} show that while state-of-the-art RL methods (e.g., GRPO, DAPO)~\citep{guo2025deepseek, yu2025dapo,Polaris2025} can enhance the pass@1 metric by reinforcing high-reward completions, they encounter significant limitations when tackling high-difficulty tasks where the base model performs poorly. This phenomenon differs from that observed in Supervised Fine-Tuning (SFT)~\cite{luo2025wizardmathempoweringmathematicalreasoning}. Within the SFT paradigm, enhancing the diversity of problem difficulty serves as a critical factor, as it can effectively improve the model's performance on downstream tasks. However, in the framework of RLVR, the inclusion of easy prompts tends to undermine the model's inherent reasoning capabilities.

\begin{figure}[tp]
	\centering
	\includegraphics[width=0.9\textwidth]{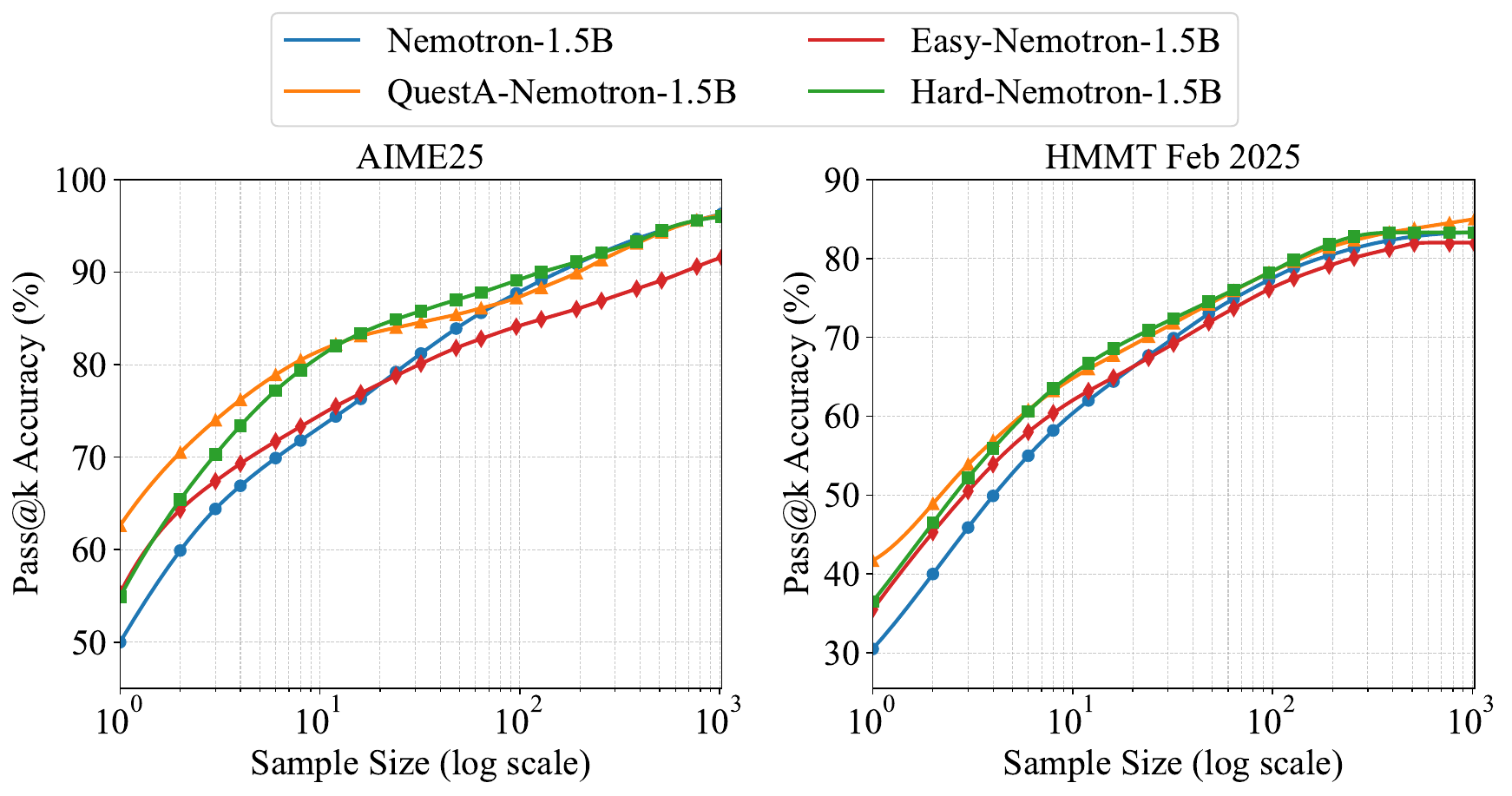}
	\caption{We compare pass@k curves of RLVR-trained models, with and without \MethodName. As a controlled experiment, we perform RL training using either easy or hard prompts. Standard RL on easy prompts (red) shows clear degradation in pass@k as $k$ increases compared to the base model (blue). Training on hard prompts (green) improves pass@k, but comes at the cost of substantially longer training. This motivates our development of \MethodName, which scaffolds hard problems to improve training efficiency and delivers consistently stronger results: the RL+\MethodName model (orange) stays above standard RL (red) across all $k$, while also preserving or improving performance at larger $k$ relative to RL trained with hard prompts.   
    }
    \label{fig:pass_easy}
\end{figure}

One insightful explanation~\citep{cui2025entropy, wang2025beyond} for the drop suggests that model overfits on correct solutions and hence causes entropy collapse, limiting its ability to explore.  To validate this, we design a controlled setup that separates prompts into easy and hard groups. When applying RLVR on the Nemotron 1.5B model~\citep{moshkov2025aimo} with the OpenR1 dataset, we find that training on easy prompts leads to a clear decline in pass@k accuracy (Figure~\ref{fig:pass_easy}). 

Given these findings, we observe that training with hard prompts is more beneficial than with easy ones. Yet, RL training on hard problems tends to be much slower, as sparse reward signals and limited sample efficiency hinder progress. The key challenge, then, is \textbf{how to structure the learning process to fully expand reasoning capabilities while mitigating the inefficiency of RL on hard tasks}. To this end, we introduce \MethodName: a parsimonious and efficient strategy that dynamically adjusts problem difficulty during RL training. The core contributions of this work are threefold:



\begin{itemize}[leftmargin=10pt,topsep=0em, itemsep=0em]
    \item We notice that the evolution of model capacity in RLVR critically depends on dataset difficulty, underscoring the importance of training on \emph{hard problems} to expand reasoning ability.  

    \item We introduce \MethodName, an efficient procedure that controls difficulty by augmenting hard problems with partial solutions. This approach provides a smooth curriculum within RL training and makes high-difficulty tasks more tractable. 
    Through our \textbf{fully open-sourced} training pipeline,  \MethodName consistently improves pass@1 and pass@k, enabling 1.5B-parameter models to reach new state-of-the-art performance---72.5\% on AIME24, 62.3\% on AIME25, and 41.7\% on HMMT25 (Table~\ref{tab:math}).  

    \item Our theoretical analysis in Section~\ref{sec:theory} explains why partial-solution augmentation accelerates RL training: by decomposing problems into intermediate steps, the method yields denser reward signals and improves sample efficiency, while still driving the model to master the hardest problems.  
\end{itemize}

\section{Tradeoffs between Reasoning Capacity and Learning Efficiency }\label{sec: motiv}


Given the ongoing debate on whether reinforcement learning enhances the reasoning capacity of language models, we design a controlled experiment to study how dataset difficulty changes model performances measured by pass@k accuracy. Specifically, we filter out easy problems and hard problems from the 220K OpenR1 dataset, base on model's success rate, each containing around 4K data. We then run RL with GRPO for one thousand steps. This setup allows us to isolate how the choice of prompt difficulty impacts the model’s reasoning capacity. In Figure~\ref{fig:pass_easy} and Figure~\ref{fig:train_eval}, we provide pass@k comparison and the learning dynamics, we make two observations.

\begin{figure}[tp]
	\centering
	\includegraphics[width=0.8\textwidth]{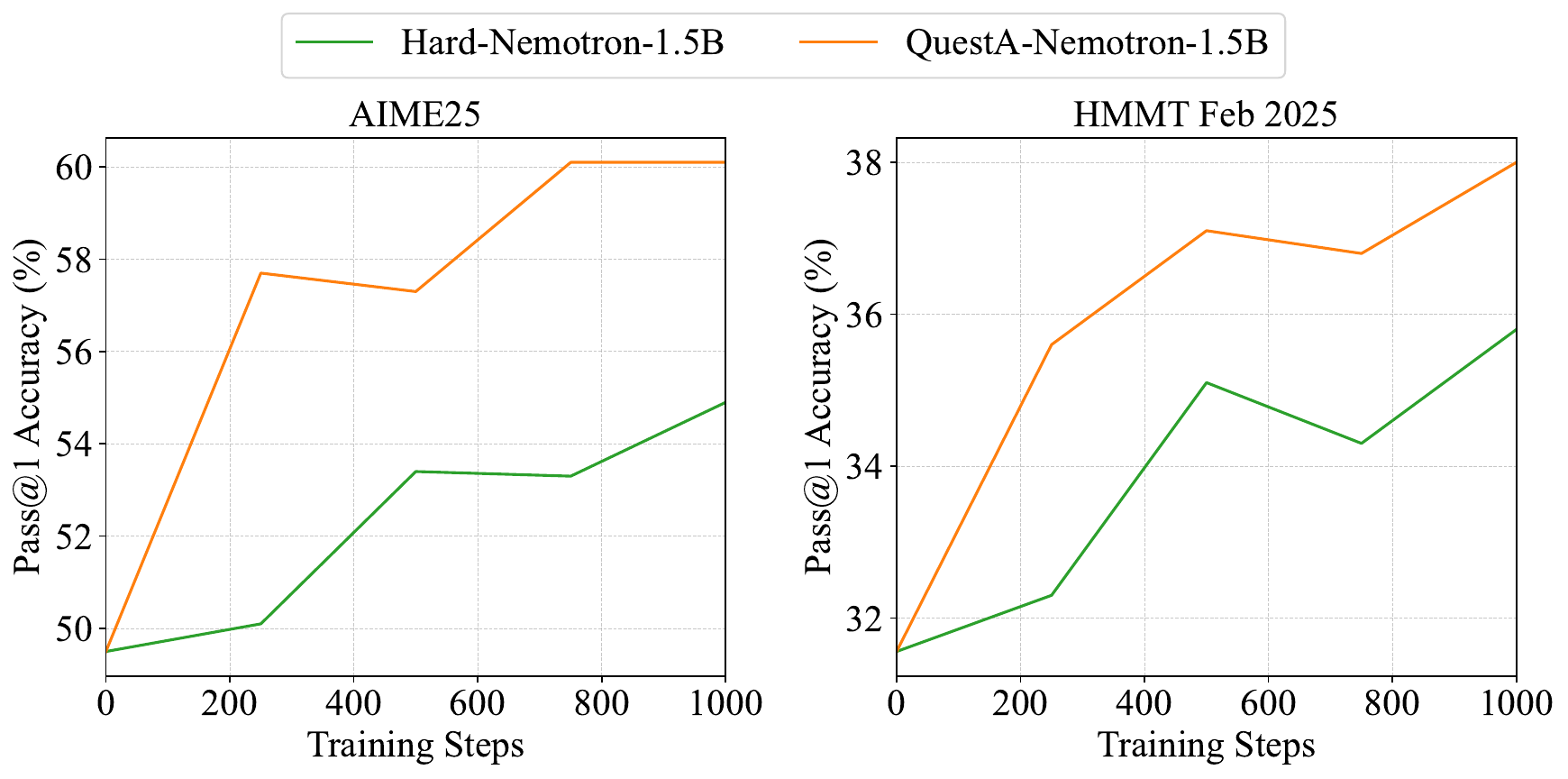}
	\caption{Comparison of RL training dynamics: Training with only hard problems (green) makes progress very slowly due to sparse rewards, while our method with partial solutions (orange) accelerates training and consistently achieves higher accuracy across training steps.}
    \label{fig:train_eval}
\end{figure}



\paragraph{RL with Easy Prompts Hurts pass@$k$ and Reasoning Capacity.} Training on easy or already-solvable problems leads to overfitting on shallow patterns, reinforcing confidence rather than expanding reasoning capacity. While pass@1 may rise, output diversity declines and performance on harder benchmarks deteriorates, with pass@$k$ dropping at larger $k$ (see Figure~\ref{fig:pass_easy}). This suggests that the model exploits familiar solution modes instead of exploring new trajectories. To truly expand capacity, RL training should focus on \emph{hard} problems, where the policy is forced to explore and acquire novel solution strategies.


\paragraph{RL with Hard Prompts Leads to Slow Learning.} Training on hard prompts directly targets the reasoning capacity of the model, but the learning process is much slower (see Figure~\ref{fig:train_eval}) and less sample-efficient. The difficulty arises because RL rewards on these problems are sparse, providing limited gradient signals for policy improvement. We formallize the underlying reason in Section~\ref{sec:theory} and in Theorem~\ref{thm:lower}.

In practice, not all questions in the training set $\mathcal{Q}$ are equally difficult, and one might hope that training on easier examples could generalize to harder ones. However, empirical evidence suggests that RL-based training exhibits a bi-modal pattern in success rates~\citep{Polaris2025}: by the end of training, models tend to either solve a question reliably or fail entirely (see Figure~\ref{fig:dist_nv}). This implies that once a question falls outside the model's capacity set, the RL algorithm is unlikely to recover. 

Together, these results highlight a tension: \emph{easy prompts dilute reasoning capacity, while hard prompts stall learning altogether. }
This motivates the need for strategies that can retain the benefits of hard problems while mitigating the inefficiency caused by sparse rewards. 
To this end, we introduce partial solutions that break a complex question into smaller, more approachable pieces. Theoretical analysis (Theorem~\ref{thm:upper}) suggests that appending part of the solutions as hint can greatly improve RL efficiency. 

Empirically, we simply choose the hint to be a part of the solution of the original question $q$ and observe faster learning in Figure~\ref{fig:train_eval}.
Surprisingly, even if we don't explicitly train the model to generate the hint, the model's capacity without hint still continues to improve and lead to steady improvement in problems out of reach in standard RL training (see Table~\ref{tab:abl-curricula}).  We elaborate on implementation details in the next section.

\section{\MethodName: Question Augmentation with Partial Solutions}\label{sec:method}

\MethodName is a modular augmentation framework designed to inject partial solution sketches into prompts during reinforcement learning (RL) training. It adresses scenarios where the base model fails to generate correct completions—conditions that typically result in sparse reward signals. Distinct from approaches that modify reward functions or optimization algorithms, \MethodName operates at the input level: it transforms original training prompts into more tractable variants, thereby exposing intermediate reasoning steps to the model.



\definecolor{titleblue}{RGB}{50, 100, 200}
\definecolor{titlegreen}{RGB}{30, 150, 80}
\definecolor{hintbg}{RGB}{240, 255, 240}

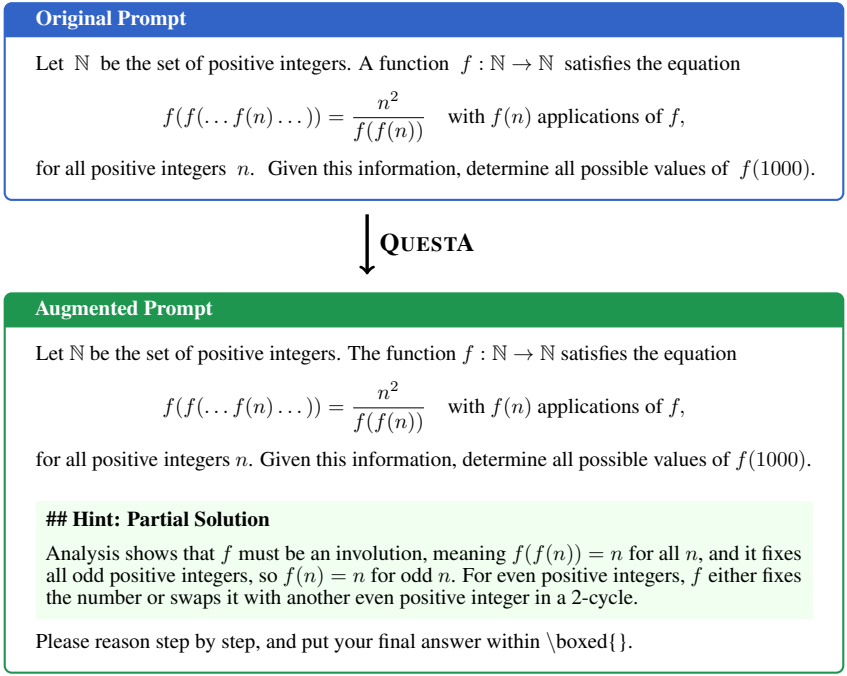
\begin{figure}[ht]
\centering

\begin{adjustbox}{width=0.8\textwidth,center}
  \begin{minipage}{\textwidth}

\begin{tcolorbox}[
  colframe=titleblue,
  colback=white,
  coltitle=white,
  title=Original Prompt,
  fonttitle=\bfseries,
  colbacktitle=titleblue,
  rounded corners,
  boxrule=1pt,
  width=\textwidth,
  enlarge left by=0mm,
  enlarge right by=0mm
]
\noindent {Let } \( \mathbb{N} \) { be the set of positive integers. A function } 
\( f : \mathbb{N} \rightarrow \mathbb{N} \) { satisfies the equation}
\[
f(f(\dots f(n) \dots)) = \frac{n^2}{f(f(n))}
\quad\text{with } f(n) \text{ applications of } f,
\]
{for all positive integers } \( n \).
{ Given this information, determine all possible values of } \( f(1000) \).
\end{tcolorbox}

\vspace{-0.2cm}
\begin{center}
\begin{tikzpicture}
\node at (1,-0.5) {\large\textbf{\MethodName}};
\draw[->, ultra thick] (0,0) -- (0,-1.0);
\end{tikzpicture}
\end{center}
\vspace{-0.2cm}

\begin{tcolorbox}[
  colframe=titlegreen,
  colback=white,
  coltitle=white,
  title=Augmented Prompt,
  fonttitle=\bfseries,
  colbacktitle=titlegreen,
  rounded corners,
  boxrule=1pt,
  width=\textwidth,
  enlarge left by=0mm,
  enlarge right by=0mm
]
\noindent Let \( \mathbb{N} \) be the set of positive integers. The function \( f : \mathbb{N} \rightarrow \mathbb{N} \) satisfies the equation
\[
f(f(\dots f(n) \dots)) = \frac{n^2}{f(f(n))}
\quad\text{with } f(n) \text{ applications of } f,
\]
for all positive integers \( n \). Given this information, determine all possible values of \( f(1000) \).

\vspace{0.3cm}

\begin{tcolorbox}[
  colback=hintbg,
  colframe=white,
  boxrule=0pt,
  left=2pt,
  right=2pt,
  top=2pt,
  bottom=2pt,
  sharp corners,
  width=\textwidth,
]
\textbf{\#\# Hint: Partial Solution} \vspace{0.2cm} \\
Analysis shows that \( f \) must be an involution, meaning \( f(f(n)) = n \) for all \( n \), and it fixes all odd positive integers, so \( f(n) = n \) for odd \( n \). For even positive integers, \( f \) either fixes the number or swaps it with another even positive integer in a 2-cycle.
\end{tcolorbox}

Please reason step by step, and put your final answer within $\backslash$boxed\{\}.

\end{tcolorbox}


  \end{minipage}
\end{adjustbox}

\caption{\MethodName augments each original question in the dataset by prepending the first $p\%$ of the solution sketch. In our experiments, 
we apply augmentation using the solution block rather than the reasoning chain-of-thought. The hint percentage $p$ is computed as the ratio of tokens used as hints to the total number of tokens in the solution sketch.}
\label{fig:method}

\end{figure}


\paragraph{Question Augmentation Mechanism}  
For a given problem $x$ with an $n$-step solution trajectory $y = (y_1, y_2, \ldots, y_n)$, \MethodName constructs a set of augmented prompts $\{\tilde{x}^{(p)}\}$, where each $\tilde{x}^{(p)}$ appends the first $p$ steps of the solution as a prefix to the original question. The parameter $p$ (e.g., $p = 50\%$ or $25\%$) quantifies the proportion of the solution revealed, thereby enabling precise control over the difficulty of the augmented prompt. 



In our empirical evaluations, we employed the OpenR1-Math-220K dataset~\citep{openr1_math220k}—a supervised fine-tuning (SFT) corpus containing solution trajectories generated by \textsc{DeepSeek-R1}. Each instance in this dataset comprises a detailed chain-of-thought (CoT) section followed by a final solution block. For augmentation, we extracted the final solution (omitting speculative reasoning within the CoT section). The solution was then truncated at a predefined percentage $p$ and prepended to the original question, yielding the augmented prompt used in RL training, as shwon in Figure~{\ref{fig:method}}. 



\paragraph{Targeting High-Difficulty Problems}
\MethodName is applied exclusively to prompts where the base model’s pass rate is close to zero. Using the OpenR1-Math-220K dataset, we first employ lightweight heuristic filters to reduce the full 220K problems to 26K of the hardest candidates. These problems are then augmented with partial-solution prefixes where we conduct a second difficulty screening: sample multiple completions from the model for each augmented prompt, and only those instances with consistently low pass rates are retained. This two-stage filtering pipeline yields a final pool of no more than 10K problems, ensuring that augmentation resources are concentrated on the most challenging cases where the base model needs additional guidance and scaffolding.

\paragraph{Integrating with RL Pipelines}  
\MethodName exhibits orthogonality to underlying RL algorithms, enabling seamless integration into existing training pipelines (e.g., GRPO~\citep{shao2024deepseekmath}, DAPO~\citep{yu2025dapo}) without modifications. Specifically, integration requires only replacing the original rollout dataset with the augmented dataset, while retaining the original reward function and policy update mechanism. To further exploit this input-level flexibility, we extended \MethodName with an iterative curriculum RL paradigm:  

\begin{enumerate}[leftmargin=10pt]
    \item First, augment the dataset with $p=50\%$, apply the difficulty filtering with the augmented prompt, and conduct reinforcement learning training until the performance saturates.
    \item Second, reduce the augmentation from $p=50\%$ to $p=25\%$, i.e. provide fewer hints. Again, we apply the difficulty filtering, and conduct reinforcement learning training until convergence.
\end{enumerate}

Here, the rationale for the choice of $p$ is provided in Appendix~\ref{app:pchoice}. By keeping the training signals strong at each stage, the method speeds up convergence on difficult tasks and makes \MethodName a simple, plug-and-play approach for curriculum-based RL.





\section{Theory: Varying Learnability Enhances RL Efficiency}\label{sec:theory}

In this section, we present a theoretical perspective on how question augmentation improves the efficiency of reinforcement learning. Our central thesis is that the primary bottleneck in RL-based reasoning lies in the difficulty of discovering successful trajectories within a finite sampling budget. Question augmentation addresses this challenge by reshaping the \emph{learnability landscape}—making hard problems more discoverable by increasing the likelihood of encountering correct trajectories.



Motivated by experiments which quantify model capcity with pass@k accuracy, we
introduce the following notions of \emph{solution set} (Definition~\ref{def:solution_set}) and \emph{model capacity set} (Definition~\ref{def:model_capacity}) for a given question $q$ and model $\mu$.
Let \(\mathcal{V}\) be the vocaboluary set, and let \( P_\mu(q, \tau) \) denote the probability that a language model \( \mu \) generates trajectory \( \tau \in \mathcal{V}^*\) when conditioned on input question \( q \in \mathcal{V}^*\).

\vspace{0.3em}
\begin{definition}[Solution Set]
\label{def:solution_set}
Given a question \( q \) and a binary reward function \( \mathrm{R}: \mathcal{V}^* \times \mathcal{V}^* \to \{0,1\} \), the \emph{solution set} is defined as:
\[
\mathcal{S}(q) = \left\{ \tau \in \mathcal{V}^* \mid \mathrm{R}(q, \tau) = 1 \right\}.
\]
\end{definition}

\vspace{0.3em}
\begin{definition}[Model Capacity Set]
\label{def:model_capacity}
Given a probability threshold \( \delta_p > 0 \), a language model \( \mu \), and a question \( q \), define the \emph{model capacity set} \( C(q, \delta_p) \) as the smallest set of trajectories whose total probability mass is at least \( 1 - \delta_p \):
\[
C(q, \delta_p) = \arg \min_{S \subseteq \mathcal{V}^*} \left\{ |S| \ \middle| \ \sum_{\tau \in S} P_\mu(q, \tau) \geq 1 - \delta_p \right\}.
\]
\end{definition}


The \emph{Model Capacity Set} \( C(q, \delta_p) \) intuitively captures the set of most likely output trajectories that the model \( \mu \) can generate for a given input \( q \), up to a small probability threshold \( \delta_p \). 


This formalization leads to a critical insight: if the model’s capacity set fails to intersect with the solution set—meaning the model is unlikely to generate any correct completions—then the RL process cannot make progress. To articulate this more formally, we begin by stating a standard assumption satisfied by many popular RL algorithms, such as DAPO and online GRPO:

\begin{assumption}[Null Gradient from Zero-Reinforcement]
\label{assum:RL_algorithm} 
The RL algorithm does not update the model weights if none of the sampled rollouts receives a positive reward (i.e., reward = 1).
\end{assumption}

Under this assumption, we easily the following lower bound, which states that if all training questions are unreachable within the model’s capacity set, the RL process is likely to stall entirely:

\begin{restatable}[Lower Bound on RL Learnability under Solution Inaccessibility]{theorem}{thms}
\label{thm:lower}
Given a probability threshold $\delta_p > 0$, if for every question $q \in \mathcal{Q}$, the model capacity set $C(q, \delta_p)$ does not intersect with the solution set $\mathcal{S}(q)$, i.e.,
\[
    C(q, \delta_p) \cap \mathcal{S}(q) = \emptyset, \quad \forall q \in \mathcal{Q},
\]
then under Assumption~\ref{assum:RL_algorithm}, when training RL for $T$ steps with $B$ samples per step such that $TB = \Theta(1/\delta_p)$, there is a constant probability that the RL algorithm will not update the model.
\end{restatable}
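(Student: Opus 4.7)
The plan is to use a straightforward mass-budget argument: because the capacity set $C(q,\delta_p)$ absorbs at least $1-\delta_p$ of the model's probability mass, its complement carries at most $\delta_p$, and the hypothesis $C(q,\delta_p)\cap\mathcal{S}(q)=\emptyset$ puts the entire solution set into that complement. Hence for every question $q$,
\[
P_\mu\bigl(\tau\in\mathcal{S}(q)\,\big|\,q\bigr)\;\le\;\delta_p.
\]
This is the single probabilistic bound from which everything else follows, and I would establish it first.

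Next I would propagate this per-sample bound to the full trajectory of $TB$ rollouts. The key structural observation is that, by Assumption~\ref{assum:RL_algorithm}, the model parameters stay frozen until the first positively-rewarded rollout appears; consequently, conditioned on the event ``no update has occurred so far'', each subsequent rollout is still drawn from the original $\mu$ and the $\delta_p$ bound above continues to apply. Formally, I would proceed by induction on the step index $t=1,\dots,T$: letting $U_t$ denote the event ``no update has occurred through step $t$'', the $B$ rollouts in step $t+1$ are conditionally i.i.d.\ draws from the unchanged base model, so
\[
\Pr\bigl[U_{t+1}\,\big|\,U_t\bigr]\;\ge\;(1-\delta_p)^{B},
\]
and chaining these conditional probabilities yields $\Pr[U_T]\ge (1-\delta_p)^{TB}$. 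The event $U_T$ is precisely ``the RL algorithm never updates the model''.

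Finally, I would plug in $TB=\Theta(1/\delta_p)$, i.e.\ $TB=c/\delta_p$ for some constant $c>0$. Using the standard inequality $\ln(1-x)\ge -x/(1-x)$ for $x\in(0,1)$, we obtain
\[
(1-\delta_p)^{TB}\;\ge\;\exp\!\left(-\frac{TB\,\delta_p}{1-\delta_p}\right)\;=\;\exp\!\left(-\frac{c}{1-\delta_p}\right),
\]
which is bounded below by a positive constant (approaching $e^{-c}$ as $\delta_p\to 0$). This delivers the claimed constant-probability lower bound on the no-update event.

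The only subtle point I anticipate is the self-consistency of the conditioning argument in the inductive step: one must be careful that ``the model is still $\mu$'' and ``no positive reward has been observed'' describe the same event, so that invoking the $\delta_p$ bound under the conditioning is legitimate rather than circular. Assumption~\ref{assum:RL_algorithm} resolves this by making the two events definitionally equivalent, which is why the induction goes through cleanly; I expect the rest of the argument to be routine.
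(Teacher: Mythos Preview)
Your proposal is correct and takes essentially the same approach as the paper: bound the solution mass by $\delta_p$ via the capacity-set definition, obtain $(1-\delta_p)^{TB}$ as the probability of never sampling a solution, and lower-bound this by a constant using the same inequality $\ln(1-x)\ge -x/(1-x)$. Your inductive conditioning argument is in fact more careful than the paper's own proof, which simply treats all $TB$ rollouts as independent draws from $\mu$ and invokes Assumption~\ref{assum:RL_algorithm} only at the end, leaving implicit precisely the self-consistency point you flag.
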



To overcome this limitation, our method \MethodName provides a simple yet effective solution: augment each question in~$\mathcal{Q}$ with a partial solution to improve the chances of sampling informative trajectories. Formally, we assume the existence of a hint $h_q$ for every question $q \in \mathcal{Q}$ that can guide the model toward discovering a valid completion.

\begin{definition}[Question Augmentation]\label{def:question_augmentation}
For every question $q \in \mathcal{Q}$, hint $h_q \in \mathcal{V}^*$ satisfies that for $\delta_p' = {\delta_p^{1/2 - \epsilon}}$ for some $\epsilon > 0$:
\begin{itemize}[leftmargin=10pt,topsep=0em, itemsep=0em]
    \item the hint $h_q$ can be generated with a non-neglible probability: $P_\mu(h_q | q) \geq \delta_p'$. 
    \item there exists a solution to the hinted problem $s_q \in \mathcal{S}(q)$ such that $s_q$ can be generated with high probability after $s_q$, i.e.
    \begin{align*}
        P_\mu(s_q | (q, h_q)) = \delta_p' . \quad R(q, h \oplus s_q) = 1. 
    \end{align*}
\end{itemize}
\end{definition}

The hint $h_q$ can exist for every question even when the model's capacity set $C(q, \delta_p)$ does not intersect with the solution set $\mathcal{S}(q)$. For instance, if every solution can be decomposed into two steps, and the model can generate each step correctly with probability $\delta_p' = \sqrt{o(\delta_p)}$, then the possibility of generating two steps correctly at the same time is only $o(\delta_p)$. 



This implies that the sampling budget needed with a hint is asymptotically almost the square root of the budget required without it ($\Theta(1/\delta_p)$), as given in~\Cref{thm:lower}. We further provide a learnability result where we assume the policy is parameterized by a softmax policy parameterization in a classical tabular RL setup.
\begin{restatable}[Informal Upper Bound on RL Learnability with Hint]{theorem}{thmUpper}
\label{thm:upper}
If we have a hint $h_q$ for every question $q \in \mathcal{Q}$ (Def.~\ref{def:question_augmentation}), then there exists an RL algorithm that can output a policy $\pi_\theta$ such that $\mathbb{E}_{q \sim \mathrm{Uniform}(\mathcal{Q})}[\mathbb{P}_{\tau \sim \pi_\theta(\cdot|q)} (\tau \in \mathcal{S}(q))] \geq 0.99$ with $O(1/\delta_p')$ sampling budget with high probability.
\end{restatable}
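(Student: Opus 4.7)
The plan is to construct a two-phase RL algorithm—first learning to complete from $(q, h_q)$, then learning to emit $h_q$ from $q$—and analyze it with the standard global convergence guarantees for softmax policy gradient in the tabular regime (as in Agarwal et al.\ and Mei et al.). The two-phase structure is natural because Definition~\ref{def:question_augmentation} already decomposes every reward-1 trajectory into a hint block followed by a continuation block, each of which is individually reachable with probability at least $\delta_p'$.

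First I would run softmax policy gradient on the augmented prompts $(q, h_q)$ for each $q \in \mathcal{Q}$. Since Definition~\ref{def:question_augmentation} gives $P_\mu(s_q \mid q, h_q) = \delta_p'$, Lemma~\ref{lem:sampling_budget} guarantees that an $O(1/\delta_p')$ sampling budget produces a reward-1 rollout with constant probability, which makes Assumption~\ref{assum:RL_algorithm} non-vacuous. Standard global convergence of softmax policy gradient then drives $\pi_\theta(s_q \mid q, h_q)$ to at least $1 - o(1)$ at a polynomial rate, uniformly over $q$.

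Next I would run softmax policy gradient on the \emph{unaugmented} prompts $q$. The crucial observation is that in the tabular softmax parameterization the logits at distinct prefixes are decoupled, so the Phase~1 updates—which touched only parameters tied to prefixes extending $(q, h_q)$—leave $P_\mu(h_q \mid q) \geq \delta_p'$ unchanged. The probability of first sampling $h_q$ and then completing to $s_q$ from $q$ is therefore at least $\delta_p' \cdot (1 - o(1)) \gg \delta_p$, and another application of Lemma~\ref{lem:sampling_budget} yields positive reward with constant probability within an additional $O(1/\delta_p')$ budget. A second invocation of the softmax policy gradient convergence theorem then upweights $h_q$ as the first block until $\pi_\theta(h_q \oplus s_q \mid q) \geq 0.99$; because $h_q \oplus s_q \in \mathcal{S}(q)$, averaging over $q \sim \mathrm{Uniform}(\mathcal{Q})$ delivers the claimed bound.

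The main obstacle is rigorously justifying the phase decoupling: in the idealized tabular setup it is automatic, since every (prefix, next-token) pair carries its own logit, but the argument would need care to confirm that Phase~1 updates do not bleed into $\pi_\theta(\cdot \mid q)$ and that Phase~2 updates do not degrade the completion $\pi_\theta(\cdot \mid q, h_q)$ learned earlier. A secondary subtlety is aggregating the per-question convergence guarantees into a uniform $O(1/\delta_p')$ sampling budget instead of an $O(|\mathcal{Q}|/\delta_p')$ one; this should follow from the tabular decoupling together with a union bound over $\mathcal{Q}$ and the constant-probability hit rate from Lemma~\ref{lem:sampling_budget}, but the bookkeeping needs to be spelled out carefully, which is presumably why the theorem is labeled informal.
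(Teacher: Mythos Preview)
Your two-phase plan is a plausible route, but it diverges substantially from the paper's argument, which is much more elementary. The paper does \emph{not} keep an autoregressive structure in the formal analysis: it collapses to a pure bandit, with a finite action set $\mathcal{S}$, a softmax policy $\mu_\theta(s\mid q)$ parameterized by a single $|\mathcal{S}|\times|\mathcal{Q}|$ logit matrix, and the hint $h_q$ reinterpreted as a \emph{subset} $h_q\subseteq\mathcal{S}$ that biases sampling. The algorithm is one-shot: draw $\Theta(1/\delta_p')$ actions from the hint-restricted distribution so that, by Lemma~\ref{lem:sampling_budget} and a union bound over $\mathcal{Q}$, a correct action $s_q$ is found for every $q$; then take a \emph{single} policy-gradient step on the \emph{unconditioned} log-likelihood $\log\mu_\theta(s\mid q)$. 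A direct sign computation shows the gradient entry is negative for every non-solution and strictly positive for some found solution, so a sufficiently large learning rate pushes $\sum_{s\in S_q}\mu_{\theta'}(s\mid q)\geq 0.99$ in one step. There is no iterative convergence and no second phase.

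Relative to this, your construction buys a story closer to the actual training pipeline (learn the continuation, then learn to emit the hint), but at real cost. You rely on the asymptotic rates of Mei et al.\ to drive $\pi_\theta(s_q\mid q,h_q)\to 1-o(1)$ and then $\pi_\theta(h_q\mid q)\to 1-o(1)$; those rates carry constants scaling with the inverse of the minimum probability the optimal arm attains along the trajectory, so converting them into an $O(1/\delta_p')$ \emph{sample} budget (iterations times samples per gradient estimate) does not fall out for free. The paper avoids this entirely with the one-large-step trick. Your worry that Phase~2 policy-gradient updates touch the continuation parameters is also real in any sequential tabular model---the gradient of a full-trajectory log-probability hits every prefix---whereas the paper's bandit reduction has no such coupling to manage. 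Finally, your instinct about the $|\mathcal{Q}|$ factor is correct: the paper's formal proof in fact spends $\Theta(|\mathcal{Q}|/\delta_p')$ samples, and the $O(1/\delta_p')$ in the informal statement suppresses this dependence.
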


Theorem~\ref{thm:upper} provides a theoretical guarantee that the model can reach a high training success rate when partial solution is included. Empirically, we  observe the model generalizes well both in-distribution and out-of-distribution to hard questions.

\section{Experiments}





\paragraph{Dataset.} We begin with the OpenR1-Math-220K dataset and use DeepSeek-R1-Distill-1.5B as a weak selection model to filter it down to the 26K hardest items. This set serves as our base prompts. We then use Nemotron-1.5B to sample eight generations per prompt and classify problems into Easy Data (7–8 correct answers) and Hard Data (0–1 correct answers), enabling controlled experiments introduced in Section~\ref{sec: motiv}. The exact prompt template is provided in Appendix~\ref{app:prompt}

\paragraph{Data Augmentation (\MethodName).} To improve the tractability of the problems, we apply \MethodName to prepend the prompt with partial solutions, i.e. first p\% of the full solution in the SFT data provided in the OpenR1-Math-220K dataset. After augmentation, we use the initial model at RL training, either Nemotron-1.5B or DeepScaleR-1.5B, to sample 8 generations per augmented prompts and select samples with 0–4 correct predictions. Full details are provided in \Cref{app:train_data}. These high-variance cases provide stronger learning signals and make the training process more effective.

\paragraph{Training Setup.}  
We use AReaL~\citep{fu2025areal} as our RL training framework, applying the GRPO algorithm~\citep{guo2025deepseek} without the Kullback--Leibler (KL) divergence loss. Following DAPO~\citep{yu2025dapo}, we also dynamically filter out prompts that are either all correct or all incorrect during rollouts. During training, we sample $n=16$ responses per prompt with a maximum prompt length of $8192$ tokens and a maximum generation length of $24000$ tokens, using a sampling temperature of $1.0$ and clipping hyperparameters with $\varepsilon_{\text{low}}=\varepsilon_{\text{high}}=0.2$. The batch size is $128$ with a mini-batch size of $1$, equivalent to $128$ gradient updates per rollout step. Optimization is performed with AdamW~\citep{kingma2017adammethodstochasticoptimization,loshchilov2019decoupledweightdecayregularization} using a constant learning rate of $2 \times 10^{-5}$. Experiments are conducted on eight NVIDIA H800 (80GB) nodes. Full details of our training method are provided in Appendix~\ref{app:met}.  

\paragraph{Evaluation Setup.}  
For each problem in the evaluation benchmarks, we generate $32$ samples and report pass@1 results. Generation uses a sampling temperature of $0.7$ and a top-$p$ value of $0.95$, with $k=32$ responses per question unless otherwise specified. \emph{It is important to note that while partial solutions were incorporated during training, no partial solutions are provided at evaluation time.}



\begin{table}[t]
	\caption{ Performance comparison (Pass@1, averaged over 32 samples) across maths benchmarks. The best results among the 1.5B models are highlighted in bold. Larger models are shown in gray as reference points. Reported results for DeepSeek-R1-Distill and Qwen3 are taken from their official documentation~\citep{guo2025deepseek, yang2025qwen3}, while the rest are self-evaluated. Our \MethodName-Nemotron-1.5B achieves state-of-the-art performance among 1.5B models and, notably, matches or even exceeds the performance of DeepSeek-R1-Distill-32B across several benchmarks, despite being over 20× smaller in parameter count. This demonstrates the effectiveness of \MethodName in enhancing small model capabilities through targeted training.
    }
    
	\begin{center}
		\resizebox{\textwidth}{!}{
			\begin{tabular}{c|ccccc|c}
				\toprule
				Model                     & AIME24 & AIME25 & HMMT FEB 25 & Olympiad Bench  & BRUMO25 & Avg \\
                \midrule
                DeepSeek-R1-Distill-1.5B  & 28.7  & 22.3   & 12.0           & 52.4            & 31.8   & 29.44\\
                Qwen3-1.7B                & 48.3  & 36.8   &        22.19      &    56.13            &   44.06     & 41.50\\
                \textcolor{gray}{DeepSeek-R1-Distill-32B}   & \textcolor{gray}{72.6}  & \textcolor{gray}{51.8}   & \textcolor{gray}{33}           &   \textcolor{gray}{65.0}              & \textcolor{gray}{68}     & \textcolor{gray}{58.08}\\
                \textcolor{gray}{Qwen3-8B}                  & \textcolor{gray}{76.0}  & \textcolor{gray}{67.3}   &   \textcolor{gray}{44.79}           &  \textcolor{gray}{68.56}               &  \textcolor{gray}{68.33}      & \textcolor{gray}{64.99}\\
				\midrule
				Nemotron-1.5B             & 61.77  & 49.50  & 31.56       & 64.62           & 58.23   & 53.14\\
                    \MethodName-Nemotron-1.5B      & \textbf{72.50}  & \textbf{62.29}  & \textbf{41.67}       & \textbf{70.36}           & \textbf{69.48}   & \textbf{63.26}\\

				\bottomrule
			\end{tabular}
		}
	\end{center}
    \label{tab:math}
\end{table}

\subsection{Experimental Results}

\paragraph{Key Results.}  
Table~\ref{tab:math} reports results on challenging math benchmarks. \MethodName yields substantial gains for Nemotron-1.5B, achieving an average improvement of $10\%$ over its baseline and a particularly strong \textbf{+13\%} on AIME25. These improvements are consistent across all benchmarks, highlighting the effectiveness of our approach in enhancing problem-solving robustness.  

Compared to other models, \MethodName-Nemotron-1.5B consistently outperforms peers of similar scale, such as DeepSeek-R1-Distill-1.5B and Qwen3-1.7B, and even surpasses larger models like DeepSeek-R1-Distill-32B across all benchmarks. On AIME25 in particular, it exceeds DeepSeek-R1-Distill-32B by a substantial margin of $+11\%$. Against the stronger Qwen3-8B, \MethodName-Nemotron-1.5B remains competitive despite operating at a fraction of the parameter scale.



\paragraph{Training Dynamics.}  
Figure~\ref{fig:train_nv} summarizes the training dynamics of \MethodName-Nemotron-1.5B. A positive correlation is observed between average response length and model accuracy, reflecting common trends in RL training. Notably, with \MethodName, the entropy during RL training remains stable and does not exhibit significant collapse.


\begin{figure}[tp]
	\centering
	\includegraphics[width=\textwidth]{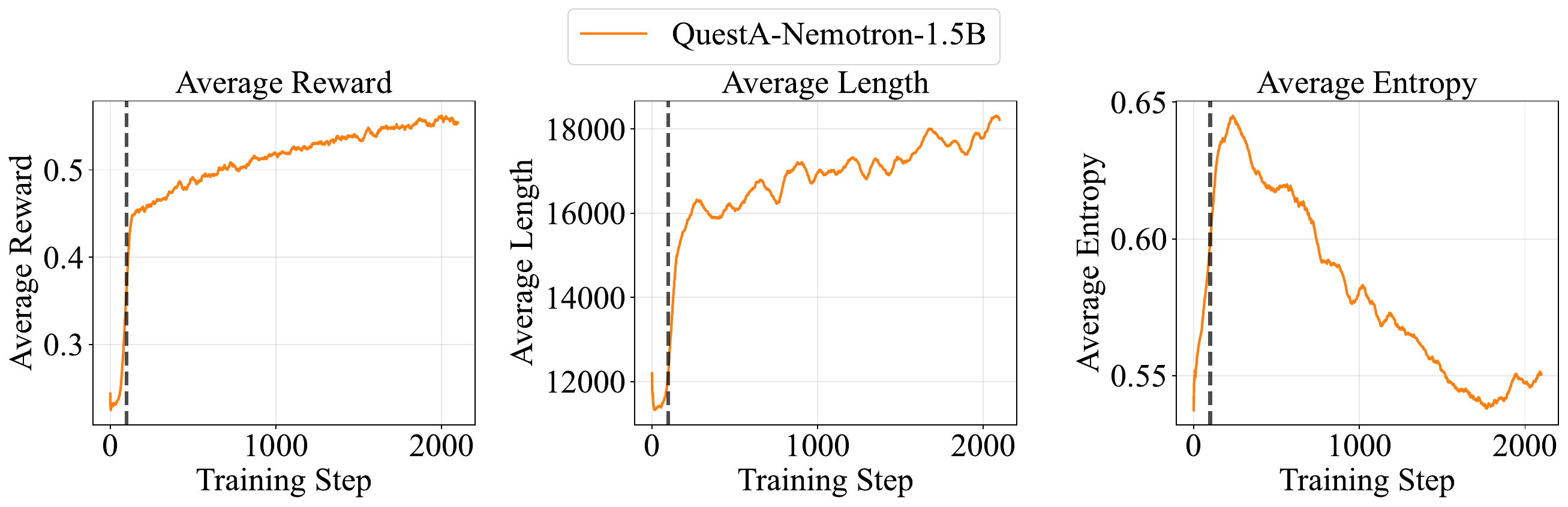}
	\caption{Training dynamics of \MethodName-Nemotron-1.5B. The first and second charts show the progression of average response length and average reward across rollout samples during the RL process, both of which steadily increase over time. The third chart presents the average entropy. Interestingly, the entropy increases over time, suggesting that \MethodName does not suffer from entropy collapse and instead encourages diverse and exploratory behavior.}
	\label{fig:train_nv}
\end{figure}

\paragraph{Pass@k Analysis.}  
Our evaluation follows the standard pass@$k$ methodology, consistent with DeepSeek-R1~\citep{guo2025deepseek}, with further details provided in Appendix~\ref{app:pass}. In contrast to recent findings that RL-based training can reduce pass@$k$ at larger $k$ values~\citep{yue2025does, liu2025prorl}, our results show that \MethodName preserves—and in many cases modestly improves—performance across a broad range of $k$. As shown in \Figref{fig:pass_easy}, incorporating partial-solution hints within a two-stage curriculum yields consistent gains across models, without the degradation in pass@$k$ often observed under standard RL training. These results indicate that \MethodName enhances both the quality and diversity of candidate solutions, rather than overfitting to a single best trajectory.


\paragraph{Generalization at Test Time when Hints are Removed.}  
A natural question arises from our approach: since we add partial solutions during RL training, does this improvement persist when hints are removed at evaluation time? To answer this, Figure~\ref{fig:dist_nv} compares the pre- and post-RL models on the 26K training prompt set, evaluated without any hints. The distribution clearly shifts away from the 0/8–1/8 bins toward higher pass rates, indicating that the model solves a larger fraction of problems even without access to partial solutions. On the evaluation AIME benchmarks, Table~\ref{tab:sovable_combined} further demonstrates that \MethodName expands coverage at \texttt{Pass@32}: for \texttt{Nemotron-1.5B}, the number of unsolved problems drops from 5 to 2 on AIME24 (newly solved indices 2, 13, 29) and from 6 to 3 on AIME25 (newly solved indices 9, 13, 27). Taken together, these results show that our method generalizes well beyond the training setting and helps solve hard problems that are otherwise inaccessible without partial-solution guidance.



\begin{figure}[tp]
	\centering
	\includegraphics[width=0.9\textwidth]{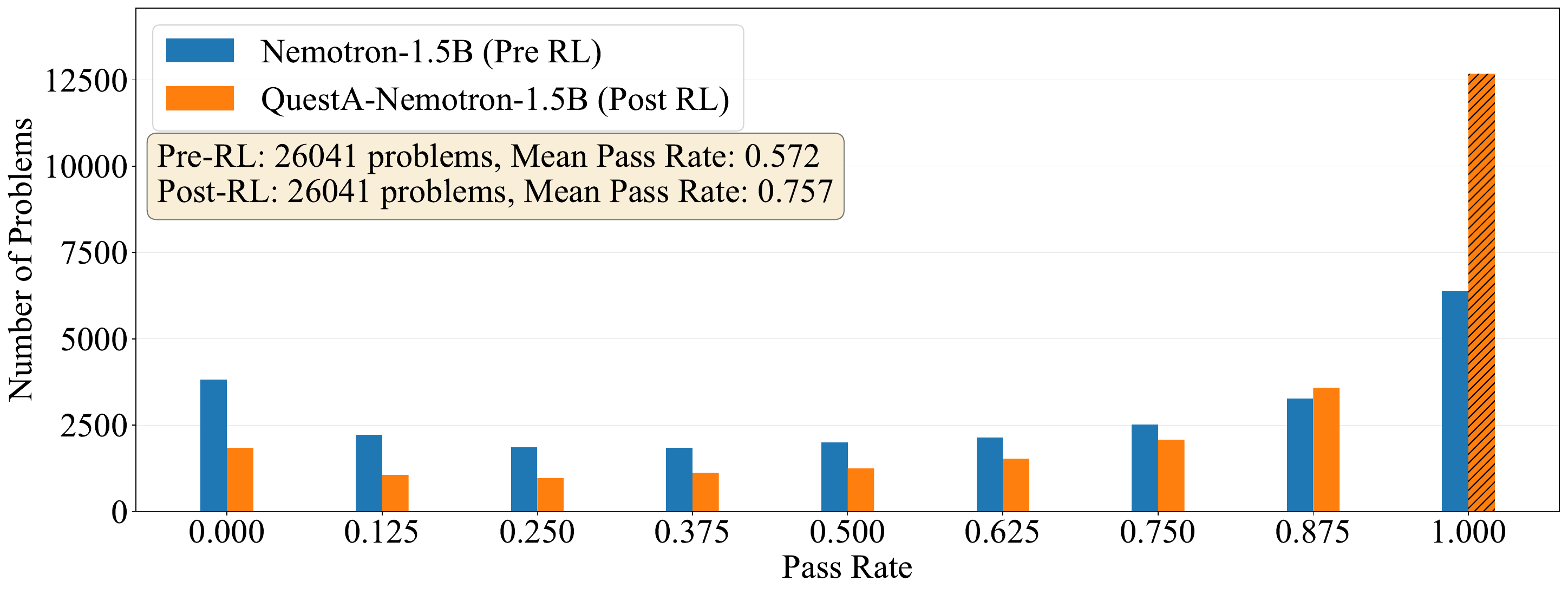}
	\caption{Pass Rate Distribution on Training Prompts. We compare the success rate on the 26K training set before and after RL, using the average pass rate over 8 samples per question. Although partial solutions are included during  \MethodName training, \textbf{no hints} are provided during this evaluation. This setup isolates the true impact of \MethodName by assessing its ability to improve performance on problems without hints. \MethodName significantly reduces the number of unsolved or partially solved problems in the training set, especially for hard ones where initial model solves only 0/8 or 1/8 times.
}
	\label{fig:dist_nv}
\end{figure}

\begin{table}[tp]
\centering
\caption{Indices of {\bf{unsolved}} problems at \texttt{Pass@32} on AIME24 and AIME25 (with indices ranging from 0-29). Our method, \MethodName, consistently improve the model capacity on hard cases where the initial model is unable to solve, improving overall coverage at \texttt{Pass@32}.}
\vspace{10pt}
\label{tab:sovable_combined}
\resizebox{0.8\textwidth}{!}{
\begin{tabular}{@{}lcc@{}}
\toprule
\textbf{Models} & \textbf{AIME24 Unsodlved Indices} & \textbf{AIME25 Unsolved Indices} \\ \midrule
Nemotron-1.5B & \textcolor{red}{2}, 3, \textcolor{red}{13}, 21, \textcolor{red}{29} & \textcolor{red}{9}, 12, \textcolor{red}{13}, 14, \textcolor{red}{27}, 29 \\
\MethodName-Nemotron-1.5B & 3, 21 & 12, 14, 29 \\ 
\bottomrule
\end{tabular}
}
\end{table}

\subsection{Further Ablations} \label{sec:fur}



\paragraph{Ablation with Difficulty Curriculum.} We first motivate the choice of a two-stage curriculum: RL on \textit{Partial-50} followed by RL on \textit{Partial-25}. From a modeling standpoint, the most appropriate inference distribution for the model should be the original (no-hint) distribution. Hence, during training we should gradually reduce reliance on hints to align the learned policy with the evaluation distribution. This motivates decreasing the partial ratio over time so that the model transitions from scaffolded reasoning to autonomous reasoning.

Empirically, Table~\ref{tab:abl-curricula} shows that, under the same 2000-step budget, the curriculum \textit{Partial-50}$\rightarrow$\textit{Partial-25} learns substantially better than training on \textit{Partial-50} alone. We cap the \textit{Partial-50} stage at 100 steps, after which we switch to \textit{Partial-25}. As shown in Figure~\ref{fig:train_partial_50}, entropy for \MethodName-Nemotron-1.5B-50 begins to decline beyond 100 steps, so transitioning at this point prevents overconfidence and sustains training stability.  We have also tried extending the curriculum from \textit{Partial-25} to \textit{Partial-0} in our experiments, but observed no gains and no increase in response length (see Figure~\ref{fig:abl-partial-50-25-0}).  



\begin{table}[t]
	\caption{Ablation Study on the Impact of Curriculum Design. This table demonstrates the importance of curriculum learning in improving model performance. The model \MethodName-Nemotron-1.5B-50 was trained entirely with Partial-50 data for 2000 steps, while \MethodName-Nemotron-1.5B followed a curriculum learning approach, starting with 100 steps of Partial-50 data followed by 1900 steps of Partial-25 data. As seen in the table, the curriculum learning approach (\MethodName-Nemotron-1.5B) outperforms training with only Partial-50 data (\MethodName-Nemotron-1.5B-50). Extension with Partial-50$\rightarrow$Partial-25$\rightarrow$Partial-0 did not yield significant improvements, and thus, are not included in the table.}
    
	\begin{center}
		\resizebox{\textwidth}{!}{
			\begin{tabular}{c|ccccc|c}
				\toprule
				Model                     & AIME24 & AIME25 & HMMT FEB 25 & Olympiad Bench  & BRUMO25 & Avg \\
				\midrule
    				Nemotron-1.5B             & 61.77  & 49.50  & 31.56       & 64.62           & 58.23   & 53.14\\
					\MethodName-Nemotron-1.5B-50      & 67.18  & 59.38  & 39.17   &  69.41  & 66.15  & 60.26 \\
                    \MethodName-Nemotron-1.5B      & \textbf{72.50}  & \textbf{62.29}  & \textbf{41.67}       & \textbf{70.36}           & \textbf{69.48}   & \textbf{63.26}\\
            		\bottomrule
			\end{tabular}
		}
	\end{center}
	\label{tab:abl-curricula}
\end{table}

\begin{table}[!h]
	\caption{Performance comparison (Pass@1, averaged over 32 samples) between Nemotron-1.5B and \MethodName-Nemotron-1.5B (By OpenMathReasoning). The two models achieve comparable results, with the version trained on OpenR1 performing slightly better overall.} 
	\begin{center}
		\resizebox{\textwidth}{!}{
			\begin{tabular}{c|ccccc|c}
				\toprule
				Model                     & AIME24 & AIME25 & HMMT FEB 25 & Olympiad Bench  & BRUMO25 & Avg \\
				\midrule
				Nemotron-1.5B             & 61.77  & 49.50  & 31.56       & 64.62           & 58.23   & 53.14\\
				\MethodName-50 (with OpenMathReasoning) & 66.46  & 58.54 & 36.35 & 66.06 & 63.13 & 58.11\\
				\MethodName-50 (with OpenR1)     & 67.18  & 59.38  & 39.17   &  69.41  & 66.15  & 60.26 \\
				\bottomrule
			\end{tabular}
		}
	\end{center}
    \label{tab:nvidia}
\end{table}

\paragraph{Ablation with Different Dataset.}  We also evaluated \MethodName on OpenMathReasoning~\cite{moshkov2025aimo}, selecting the 60K questions with \texttt{pass\_rate\_72b\_tir} of 0 or 1/32. Due to time constraints, we trained only the first stage of \MethodName with 50\% partial solutions. Table~\ref{tab:nvidia} shows that \MethodName-Nemotron-1.5B-50 achieves similar performance as using the OpenR1 dataset. This indicates that our approach generalizes across datasets.

\paragraph{Other Ablations.}  
We also conduct an extensive set of comparative experiments and ablation studies, with detailed results provided in Appendix~\ref{app:res}. These include an ablation of \MethodName without hints (Appendix~\ref{app:woh}), experiments with different model backbones  (Appendix~\ref{app:diff_model}), and the full set of pass rates and training curves for additional models (Appendix~\ref{app:sup}).

\section{Conclusions}


In this work we introduced \MethodName, a lightweight data-centric framework that augments hard prompts with partial-solution hints during RL training. Without altering model architecture or reward design, \MethodName sets new state-of-the-art results for 1.5 B-scale models on AIME24, AIME25 and HMMT25. 
Further, we theoretically demonstrate how question augmentation can improve sample efficiency. Our analysis shows that the method can potentially be generalized to other domains such as competitive coding, software engineering or other agentic tasks. Desigining proper question augmentation pipelines for theses new tasks can be an important and interesting future direction.






\section{Ethics statement}\label{border impact}

We use only public, non‑PII datasets—OpenR1-Math-220K (Apache 2.0) and OpenMathReasoning (CC BY 4.0)—in full compliance with their licenses (including attribution and modification notices); no new human-subjects data were collected, no re-identification was attempted, and no IRB review was required. Our augmentation pipeline generates math problems and solutions while avoiding harmful or copyrighted non-math content; outputs may inherit source biases, so we report settings transparently, discourage high‑stakes deployment or misuse without safeguards and human oversight, and will release artifacts that respect the original licenses.

\section{Reproducibility Statement}\label{Reproducibility statement}
To ensure reproducibility, we provide the code, dataset and model in the the supplementary materials and anonymous github \url{https://anonymous.4open.science/r/questa932/README.md}. In the README.md file included with the code, we present a step-by-step guide for reproducing our results.

\section{Acknowledgements and Disclosure of Funding}
J.Z. acknowledges support by the National Key R\&D Program of China 2024YFA1015800 and
Xiongan AI Institute.


\bibliography{iclr2026_conference}
\bibliographystyle{iclr2026_conference}

\newpage

\appendix
\section{The Use of Large Language Models (LLMs)}

The Large Language Models (LLMs) were exclusively utilized to polish the writing and detect potential typos, with no involvement in other aspects.

\section{Implementation Details}

\subsection{RLVR Algorithms}\label{app:met}
We have employed the GRPO algorithm enhanced with a subset of DAPO techniques. Primarily, we have integrated DAPO's Dynamic Sampling Trick and eliminated the KL divergence term, resulting in an optimization objective that is:
\begin{equation}
\begin{aligned}
\mathcal{J}(\theta) =\quad& \mathbb{E}_{q\sim \mathcal{D}, \{o_i\}_{i=1}^G\sim \pi_{\theta_\text{old}}(\cdot\mid q)}\\&
\Bigg[\frac{1}{G}\sum_{i=1}^{G}\frac{1}{|o_i|}\sum_{t=1}^{|o_i|} 
\min \Big( r_{i,t}(\theta) \hat{A}_{i,t},  
\ \text{clip} \Big( r_{i,t}(\theta), 1 - {\varepsilon}, 1 + {\varepsilon} \Big) \hat{A}_{i,t} \Big) \Bigg]
\\
\text{s.t.}\quad& 0< \#\Big\{o_i\mid [o_i \text{ is correct}]\}\Big\}< G,
\label{eq:dapoloss}
\end{aligned}
\end{equation}
where
\begin{equation}
    r_{i,t}(\theta)=\frac{\pi_{\theta}(o_{i,t} \mid q, o_{i,<t})}{\pi_{\theta_{\text{old}}}(o_{i,t} \mid q,o_{i,<t})},\quad\hat{A}_{i,t} = \frac{R_i - \text{mean}(\{R_i\}_{i=1}^G)}{\text{std}(\{R_i\}_{i=1}^G)}.
\label{eq:advantage_calculation}
\end{equation}

Our reward function \( R \) mirrors that of DeepScaleR~\citep{deepscaler2025}, employing an Outcome Reward Model. It returns 1 if and only if both the answer and format are correct; otherwise, it returns 0. In summary, our reward function yields:
\begin{equation}
R =
\begin{cases}
1, & \text{if the answer (e.g. passes basic LaTeX/Sympy checks)}\\ & \text{  and format (e.g. exists \texttt{<think>} and \texttt{</think>}) are both correct}, \\
0, & \text{otherwise}.
\end{cases}
\end{equation}

\subsection{Low-Variance pass@k Estimation}\label{app:pass}

Pass@$k$ is a measure of a model's problem - solving ability, indicating the probability that the model can generate at least one correct solution in $k$ attempts. Specifically, for each problem $x_i$ in the evaluation dataset $\mathcal{D}$, we generate $n$ samples (where $n \geq k$) and count the correct ones as $c_i$. The direct calculation formula is:
\begin{equation}
\text{pass@}k := \mathbb{E}_{x_i \sim \mathcal{D}} \left[ 1-(1-\frac{c_i}{n})^k \right]
\end{equation}
However, this formula has excessive variance and insufficient accuracy. To solve this problem, we adopt the unbiased estimation method proposed by Chen et al. ~\citep{chen2021evaluatinglargelanguagemodels}, using the unbiased estimator of pass@$k$ over the dataset:
\begin{equation}
\text{pass@}k := \mathbb{E}_{x_i \sim \mathcal{D}} \left[ 1 - \frac{\binom{n - c_i}{k}}{\binom{n}{k}} \right]
\end{equation}
In our experiments, to ensure sufficient accuracy, we set $n$ such that $2k \leq n$, which helps further reduce the variance of the estimate.

\subsection{More Related Works}\label{app:rela}


Recent studies show that RL algorithms, such as PPO \citep{schulman2017proximalpolicyoptimizationalgorithms} and GRPO \citep{guo2025deepseek}, can greatly enhance model reasoning capabilities. Building on this, several works have refined this paradigm from different perspectives. One method can be adjusting the reward function. Some studies \citep{zhu2025surprising,shao2025spurious} directly modify the reward function to improve training efficiency.  Other methods introduced intermediate process rewards \citep{wang2024mathshepherdverifyreinforcellms,malik2025rewardbench2advancingreward},  while Wen et al. \citep{wen2025reinforcement} set up a separate correctness judgment for CoT to obtain rewards.

Another novel perspective aims to improve sample efficiency by measuring certainty. For example, TreeRL \citep{hou2025treerlllmreinforcementlearning} and VinePPO \citep{kazemnejad2025vinepporefiningcreditassignment} enhanced sample effects by introducing entropy or confidence. MRT \citep{qu2025optimizingtesttimecomputemeta}, on the other hand, reused partial trajectories during testing to boost sample efficiency. {R3 \citep{xi2024traininglargelanguagemodels} improves RL sample efficiency by decomposing human solution steps and providing preceding steps to guide the model in completing subsequent ones.}
Further, some research adopted a multi-stage training or reasoning mode, exploring from different angles such as training length \citep{deepscaler2025}, question difficulty \citep{parashar2025curri}, and fixed-length summaries during reasoning \citep{yan2025inftythink}.

In addition to designing better algorithms, another line of research \citep{shao2024deepseekmath,yue2025does,zhao2025echochamberrlposttraining} has investigated how reinforcement learning affects the frontier of model capabilities, observing a decay in pass@k when k becomes large. In response to this phenomenon, some works \citep{yu2025dapo, liu2025prorl, Polaris2025} maintained entropy stability by adjusting training entropy through methods such as increasing the clipping upper bound, enlarging the temperature coefficient, extending the training length, and periodically updating the KL reference model. StepHint \citep{zhang2025stephint} also preserved entropy stability by leveraging intermediate thinking content of iterative length as a prompting signal.

In contrast to the aforementioned research, our work adopts an orthogonal approach by using part of the ground-truth solution as a hint, without requiring any modifications to the existing reinforcement learning infrastructure. We provide both theoretical justification and empirical evidence that this strategy maintains pass@k without compromising the exploratory capacity of the underlying reinforcement learning algorithm.

\subsection{Benchmarks}\label{app:bench}

We evaluate the models' breadth across various tasks in multiple domains, including mathematics, coding, reasoning, and logical inference. For mathematics, we follow DeepScaleR~\citep{deepscaler2025} and Nemotron~\citep{moshkov2025aimo}, and conduct assessments on more challenging mathematical datasets such as AIME2024~\citep{AIME2024}, AIME2025~\citep{AIME2025}, Olympiad Bench~\citep{he2024olympiadbenchchallengingbenchmarkpromoting}, HMMT FEB 25~\citep{hmmt}, and BRUMO25~\citep{brumo}. Specifically, HMMT25 Feb and BRUMO25 are both sourced from MathArena~\citep{balunovic_srimatharena_2025}. In the realm of coding, we utilize commonly employed datasets, including Code Contests~\citep{Li_2022}, Codeforces\footnote{https://codeforces.com/}, and LCB V5 202410-202502~\citep{jain2024livecodebenchholisticcontaminationfree}. For logical reasoning tasks, we assess our models' capabilities using GPQA Diamond~\citep{rein2023gpqagraduatelevelgoogleproofqa} \footnote{In the GPQA Diamond dataset, multiple-choice questions are presented in the form of options rather than directly providing the answer, requiring the model to output only A, B, C, or D.}and Zebraliogic~\citep{lin2025zebralogicscalinglimitsllms}. The benchmarks related to coding and logical reasoning are all referenced from AReaL~\citep{fu2025areal}.

\subsection{Training Dataset}\label{app:train_data} 

The dataset employed in our study is OpenR1-Math-220K~\citep{openr1_math220k}. Prior to commencing the training of the Partial Solution, we conducted a preliminary screening of the dataset. Specifically, we utilized the DeepSeek-R1-Distill-1.5B~\citep{guo2025deepseek} model to perform eight inference operations on each of the 220k data entries in the OpenR1 dataset. Subsequently, we compared the annotated answers in the OpenR1 dataset with the results generated from each inference to tally the number of correct instances for each data entry. Ultimately, we selected the data entries with 0 or 1 correct instance as the training samples for our study. The final dataset size is 26K.

For controlled comparisons, we further split this 26K subset by re-sampling \textbf{Nemotron-1.5B} eight times per item and counting correct completions. We define \emph{Easy Data} as questions with correct counts in $[7, 8]$ and train a model on this split, denoted \texttt{Easy-Nemotron-1.5B}. Similarly, we define \emph{Hard Data} as questions with correct counts in $[0, 1]$ and train \texttt{Hard-Nemotron-1.5B} on this split.

Additionally, for the augmented data, we perform eight inference passes using the model currently under training. We then select samples for which the number of correct predictions falls within the range of $[0, 4]$. This criterion is motivated by the finding that samples exhibiting higher variance are more beneficial for training~\citep{gao2025oneshotentropyminimization,wang2025reinforcementlearningreasoninglarge}. The range $[0, 4]$ is chosen because it includes the point of maximum sample variance, which is achieved with four correct predictions out of eight trials. For convenience, we refer to augmented data with partial ratio $p$ as \textit{Partial-$p$} data.


\subsection{The rationale for the choice of $p$}\label{app:pchoice}

\begin{table}[!h]
    \caption{Number of problems vs pass rate under different hint levels on OpenMath-Nemotron-1.5B before training. We evaluated OpenMath-Nemotron-1.5B on the OpenR1 dataset after the first round of filtering, with each problem assessed 8 times. The table illustrates the distribution of correct answers (n) where $n \in \{0, 1, \dots, 8\}$.}
    \begin{center}
        \resizebox{\textwidth}{!}{
            \begin{tabular}{c|ccccccccc}
                \toprule 
                Hint Levels & 0 / 8 & 1 / 8 & 2 / 8 & 3 / 8 & 4 / 8 & {5 / 8} & {6 / 8} & {7 / 8} & {8 / 8} \\
                \midrule
                \textit{{Partial-50}} & {143} & {224} & {304} & {472} & {710} & {1013} & {1779} & {3655} & {17741} \\
                \textit{{Partial-25}} & {3155} & {1997} & {1814} & {1785} & {1902} & {2175} & {2614} & {3440} & {7159} \\
                \textit{{Partial-10}} & {3589} & {2090} & {1865} & {1842} & {1905} & {2176} & {2653} & {3415} & {6506} \\
                \textit{{Partial-0}}  & {3812} & {2218} & {1854} & {1842} & {2007} & {2136} & {2517} & {3264} & {6391} \\
                \bottomrule
            \end{tabular}
        }
    \end{center}
    \label{tab:problem-pass-rate-hint-levels}
\end{table}

{In this study, we evaluated the performance of OpenMath-Nemotron-1.5B on the OpenR1 dataset under various hint levels. The evaluation was performed after the first round of filtering, and each problem was assessed 8 times to capture the predictive distribution. The resulting table (Table \ref{tab:problem-pass-rate-hint-levels}) shows the distribution of correct answers across different hint levels, where the values represent the number of times the model answered correctly ($n \in \{0, 1, \dots, 8\}$).}

{The selection of the hint parameter $p$ was primarily based on these evaluation results. As shown in the table, the performance with a Partial-50 hint significantly reduces task difficulty, as evidenced by the high pass rates across most levels. In contrast, Partial-25 (25\% hint) exhibits a performance pattern similar to that of the no-hint scenario (Partial-0), with only marginal differences in task difficulty.}

{This minimal difference in difficulty between Partial-0 and Partial-25 suggests that training with Partial-25 does not provide substantial gains compared to Partial-0. Consequently, we adopted a stepwise design in which the hint level is first set to $p = 50\%$, followed by $p = 25\%$, to evaluate the model's performance under varying conditions.}

\subsection{Evaluation Setup}\label{app:eval}

We configured the models to have a maximum generation length of 32,768 tokens. In line with DeepSeek-R1~\citep{guo2025deepseek}, we utilized pass@$k$ evaluation~\citep{chen2021evaluatinglargelanguagemodels}, with the formula detailed in~\ref{app:pass}. We reported pass@$1$ using a non-zero temperature. Specifically, we used a sampling temperature of $0.7$ and a top-$p$ value of 0.95 to generate $k$ responses per question, typically set at $32$, with deviations explicitly noted. \emph{Particular attention should be paid to the fact that, although we incorporated {partial Solution} during training, it was not included in the evaluation phase.}

\subsection{Detail on Prompt Template}\label{app:prompt}

\begin{yellowbox}
\textbf{DeepScaleR Coding's Inference:}

\begin{CJK*}{UTF8}{gbsn}
<｜User｜>\{input\}<｜Assistant｜><think>
\end{CJK*}
\end{yellowbox}

\begin{yellowbox}
\textbf{DeepScaleR Others' Inference:}

\begin{CJK*}{UTF8}{gbsn}
<｜User｜>\{input\}

Please reason step by step, and put your final answer within \textbackslash{}boxed\{\}.<｜Assistant｜><think>
\end{CJK*}
\end{yellowbox}

\begin{bluebox}
\textbf{Nemotrion Coding's Inference:}

<|im\_start|>user

\{input\}

<|im\_end|>
<|im\_start|>assistant

<think>
\end{bluebox}

\begin{bluebox}
\textbf{Nemotrion Others' Inference:}

<|im\_start|>system

Please reason step by step, and put your final answer within \textbackslash{}boxed\{\}.<|im\_end|>

<|im\_start|>user

\{input\}<|im\_end|>

<|im\_start|>assistant
\end{bluebox}

\begin{promptbox}
\textbf{Training prompt with partial solutions (math RL):}

\{Problem\}

\#\# Hint: \{Partial Solution\}

Please reason step by step, and put your final answer within $\backslash$boxed\{\}.
\end{promptbox}


\section{Theory}


\subsection{Proofs}

\label{apx: proof}






\thms*
\begin{proof}
Let $p_{\text{sol}} = \sum_{\tau^* \in \mathcal{S}(q)} P_\mu(\tau^*|q)$ denote the cumulative generation probability of any solution trajectory. By $C(q, \delta_p) \cap \mathcal{S}(q) = \emptyset$ and Def~\ref{def:model_capacity}:  
$$
p_{\text{sol}} = \sum_{\tau^* \in \mathcal{S}(q)} P_\mu(\tau^*|q)  < \delta_p
$$

For $N = TB$ independent samples across $T$ steps with batch size $B$, the probability of complete failure (no solution sampled) is:
$$
\mathbb{P}(\text{failure}) = (1 - p_{\text{sol}})^N > (1 - \delta_p)^N
$$

Given $TB = \Theta(1/\delta_p)$, we have:
$$
(1 - \delta_p)^N > (1 - \delta_p)^{\Theta(1/\delta_p)} = \Theta(1).
$$

The last inequality follows from the fact that $(1-x)^{1/x} > \exp(-1/(1-x))$ for $x \in (0,1)$. By Assumption~\ref{assum:RL_algorithm}, if no solution is found, the model weights remain unchanged.
\end{proof}

\begin{restatable}[Upper Bound on Sampling Budget for Solution Given Hint]{lemma}{lemSamplingBudget}
\label{lem:sampling_budget}
    Given a question $q \in \mathcal{Q}$, if there exists a hint $h_q$ for the question $q$ (Def.~\ref{def:question_augmentation}), then if we perform $TB = \Theta(1/\delta_p') = \Theta(\delta_p^{\epsilon} /\sqrt{\delta_p}) $ i.i.d sampling over the initial model conditioned on  $(q, h_q)$, we can find a valid solution with a constant probability. 
\end{restatable}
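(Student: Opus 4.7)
The plan is to reduce the statement to a standard independence argument on geometric-type tail bounds. By Definition~\ref{def:question_augmentation}, conditioning on the augmented input $(q, h_q)$ guarantees the existence of a specific continuation $s_q$ with $P_\mu(s_q \mid (q, h_q)) = \delta_p'$ and $R(q, h_q \oplus s_q) = 1$. So it suffices to show that under $N = TB = \Theta(1/\delta_p')$ i.i.d.\ draws from $P_\mu(\cdot \mid (q, h_q))$, we sample this particular $s_q$ (or any other trajectory in $\mathcal{S}(q)$) with constant probability.

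First, I would set $N = c/\delta_p'$ for a universal constant $c > 0$ (to be chosen), and denote by $\tau_1, \dots, \tau_N$ the i.i.d.\ rollouts. Second, because the samples are independent, the probability that none of them equals $s_q$ is exactly $(1 - \delta_p')^N$. Third, I would apply the elementary inequality $1 - x \leq e^{-x}$ for $x \in [0,1]$ to obtain $(1 - \delta_p')^N \leq e^{-N \delta_p'} = e^{-c}$. Fourth, taking the complement gives that with probability at least $1 - e^{-c}$, at least one $\tau_i$ equals $s_q$, hence at least one sampled trajectory lies in $\mathcal{S}(q)$. Choosing e.g.\ $c = 1$ yields a success probability of $1 - 1/e$, which is a constant bounded away from zero, establishing the claim. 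Finally, I would translate this back to the $\delta_p$ parametrization by substituting $\delta_p' = \delta_p^{1/2 - \epsilon}$, recovering the stated budget $\Theta(\delta_p^{\epsilon}/\sqrt{\delta_p})$.

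There is no real obstacle here; the lemma is a textbook concentration argument once Definition~\ref{def:question_augmentation} has been set up. The only subtlety worth flagging in the write-up is that the conditioning on $(q, h_q)$ must be interpreted with respect to the autoregressive factorization of $\mu$, which is consistent with how \MethodName prepends $h_q$ to the prompt at rollout time; thus the lower bound $P_\mu(s_q \mid (q, h_q)) = \delta_p'$ directly yields the per-sample success probability used above, and no additional measurability or coupling argument is required. A minor secondary point is that the bound is in fact conservative: we only account for hitting the distinguished solution $s_q$, while any other trajectory in $\mathcal{S}(q)$ would also succeed, so the constant can be tightened if desired, though this is unnecessary for the stated conclusion.
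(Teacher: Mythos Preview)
Your proposal is correct and mirrors the paper's own proof almost exactly: both invoke Definition~\ref{def:question_augmentation} to lower-bound the per-sample success probability by $\delta_p'$, compute the all-failure probability $(1-\delta_p')^N$ under independence, and apply $1 - x \le e^{-x}$ to conclude a constant success probability for $N = \Theta(1/\delta_p')$. The only cosmetic difference is the choice of constant (the paper takes $N \ge 10/\delta_p'$ to get success probability $> 0.99$, while you take $c=1$ to get $1 - 1/e$), which is immaterial to the stated claim.
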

\begin{proof}
By Definition~\ref{def:question_augmentation}, we know:
\begin{enumerate}
    \item $P_\mu(h_q | q) \geq \delta_p'$
    \item $\exists s_q \in \mathcal{S}(q): P_\mu(s_q | (q, h_q)) \geq \delta_p'$
\end{enumerate}

With $N = TB \geq 10/\delta_p'$ independent samples conditioned on $(q, h_q)$, the probability of not finding the solution $s_q$ is:
$$
\mathbb{P}(\text{no solution}) = (1 - P_\mu(s_q | (q, h_q)))^N \leq (1 - \delta_p')^{10/\delta_p'} \leq \exp(-10) < 0.01.
$$

Therefore, $\mathbb{P}(\text{finding solution}) > 0.99$.
\end{proof}

\thmUpper*

This theorem is a direct corollary of the Theorem 5 regarding the bandit setup in \citep{mei2022globalconvergenceratessoftmax}. Because the setup here is relatively simple, we also present a detailed proof for this special case here. We first formalize our setup as follows:

\begin{assumption}[Tabular RL with Hint]
\label{assum:Tabular_RL_with_Hint}
    We consider the tabular RL setting with softmax policy parameterization. There exists a finite set of possible questions $\mathcal{Q}$ and a finite set of possible solutions $\mathcal{S}$. For each question $q \in \mathcal{Q}$, there exists a hint $h_q$, which is a subset of solutions $h_q \subseteq \mathcal{S}$. 

    The policy is parameterized by a $|\mathcal{S}| \times |\mathcal{Q}|$ matrix $\theta$ in the following way:
    \begin{align*}
        \mu_\theta(s|q) = \frac{\exp(\theta_{s,q})}{\sum_{s' \in \mathcal{S}} \exp(\theta_{s',q})}
    \end{align*}
\end{assumption}

Here the setup is different than the autoregressive setting in our experiments and simplify the model to a tabular setup for the simplicity of analysis. We now restate the assumption on the existence of hint in this setup.

\begin{assumption}[Hint Existence, Formal Version of Definition~\ref{def:question_augmentation}]
\label{assum:Hint_Existence}
    For each question $q \in \mathcal{Q}$, there exists a hint $h_q \subseteq \mathcal{S}$ such that $\sum_{s \in h_q} P_\mu(s|q) \geq \delta_p'$. Further, there exists a solution $s_q \in \mathcal{S}$ such that $P_\mu(s_q|q) \geq \delta_p' \sum_{s \in h_q} P_\mu(s|q)$.
\end{assumption}

\textbf{RL Algorithm:} We will first sample $\Theta(1/\delta_p')$ action based on the policy $\mu_\theta$ conditioned on the question $q$ and the hint $h_q$. Then we will do a one-step policy gradient update on our policy. Noted that here we can reach high reward within one step because the reward function is deterministic.

\begin{theorem}[Formal Version of Theorem~\ref{thm:upper}]
    Under Assumption~\ref{assum:Tabular_RL_with_Hint} and Assumption~\ref{assum:Hint_Existence}, running $1$ steps of policy gradient update with sampling budget $\Theta(1/\delta_p')$, the learned policy achieves:
    \begin{align*}
        \mathbb{E}_{q \sim \mathrm{Uniform}(\mathcal{Q})}[\mathbb{P}_{\tau \sim \mu_\theta(\cdot|q)} (\tau \in \mathcal{S}(q))] \geq 0.99
    \end{align*}
    with probability $0.99$.
\end{theorem}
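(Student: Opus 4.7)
The plan is to follow the two-ingredient template implicit in the algorithm description: first use Lemma~\ref{lem:sampling_budget} to discover, for each $q$, a valid completion under the hint-conditioned distribution, and then use one softmax policy-gradient step to concentrate probability mass on the discovered solutions. The tabular parameterization in Assumption~\ref{assum:Tabular_RL_with_Hint} is what makes a one-step analysis workable, since the columns $\theta_{\cdot,q}$ are decoupled across questions.

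First I would formalize discovery: for each $q \in \mathcal{Q}$ draw $N = C/\delta_p'$ i.i.d.\ samples from $\mu_\theta$ conditioned on the hint, with $C$ a large absolute constant. Lemma~\ref{lem:sampling_budget} then recovers $s_q$ with probability at least $1 - 1/(200|\mathcal{Q}|)$ after enlarging $C$ logarithmically in $|\mathcal{Q}|$, and a union bound over the finite set $\mathcal{Q}$ gives simultaneous discovery of $\{s_q\}_{q \in \mathcal{Q}}$ with probability at least $0.995$. Next I would analyze the one-step update: because $\theta_{\cdot,q}$ depends only on the rollouts for $q$, the REINFORCE direction restricted to column $q$ is $g_{s,q} = \mathbb{1}[s=s_q] - \mu_\theta(s\mid q)$, and a step with a sufficiently large learning rate $\eta_{\text{lr}}$ can force $\theta_{s_q,q}^{+} - \max_{s\neq s_q}\theta_{s,q}^{+} \geq \log(100|\mathcal{S}|)$, which guarantees $\mu_{\theta^+}(s_q\mid q) \geq 1 - 1/100$. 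Combining these via linearity of expectation yields
\[
\mathbb{E}_{q \sim \mathrm{Uniform}(\mathcal{Q})}\bigl[\mathbb{P}_{\tau \sim \mu_{\theta^+}(\cdot\mid q)}(\tau \in \mathcal{S}(q))\bigr] \;\geq\; 0.995\cdot 0.99 \;\geq\; 0.99,
\]
and after routinely tightening constants the whole chain holds jointly with probability at least $0.99$. The total sampling budget is $|\mathcal{Q}|\cdot N = \Theta(1/\delta_p')$, matching the statement under the convention that $|\mathcal{Q}|$ is a constant.

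The hard part will be the one-step update. A plain finite-$\eta_{\text{lr}}$ softmax gradient step only lifts $\theta_{s_q,q}$ by $\eta_{\text{lr}}(1 - \mu_\theta(s_q\mid q))$, so reaching $0.99$ concentration in a single step requires an aggressively chosen $\eta_{\text{lr}}$; making this rigorous reduces to an explicit softmax calculation in the one-state bandit case (equivalently, the specialization of Mei et al.\ 2022, Thm.\ 5 cited in the paper), together with a concentration argument showing that the stochastic gradient formed from the empirical samples stays close to the true gradient, which only costs an extra logarithmic factor in $N$. A secondary care point is that the rollouts used for the update come from the hint-conditioned distribution; one can either importance-weight back to $\mu_\theta(\cdot\mid q)$ or simply perform a direct logit-bump on $\theta_{s_q,q}$ once $s_q$ has been identified, which is equivalent under the tabular parameterization and avoids the need for a bias correction.
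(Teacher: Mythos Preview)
Your plan matches the paper's: discover a solution for each $q$ via Lemma~\ref{lem:sampling_budget} and a union bound over $\mathcal{Q}$ (the paper likewise absorbs $|\mathcal{Q}|$ into the $\Theta(\cdot)$), then take one large gradient step. The paper's execution of the second step is simpler than the route you sketch first: it writes the empirical REINFORCE gradient (using the unconditioned $\log\mu_\theta$ but the hint-conditioned samples) and observes that, \emph{deterministically} once the found set $S_q\neq\emptyset$, every coordinate $s\notin S_q$ has $\mathrm{PG}_{s,q}<0$ while (since the coordinates sum to zero) some $s^*\in S_q$ has $\mathrm{PG}_{s^*,q}>0$; sending $\eta$ large then pushes all softmax mass onto $S_q$. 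This sign argument is exactly the ``direct logit-bump'' alternative you mention at the end, and it removes any need for the concentration and off-policy corrections you flag as the hard part. One small slip: $0.995\cdot 0.99<0.99$; the $0.995$ is the probability of the discovery event, and conditional on that event the expectation over $q$ is already $\ge 0.99$, so no multiplication is needed.
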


\begin{proof}

 First, by Assumption~\ref{assum:Hint_Existence}, for any question $q$, we have:
   \begin{align*}
   \sum_{s \in h_q} P_\mu(s|q) \geq \delta_p \quad \text{and} \quad \exists s_q: P_\mu(s_q|q) \geq \delta_p' \sum_{s \in h_q} P_\mu(s|q)
   \end{align*}

With sampling budget $N = \Theta(|\mathcal{Q}|/\delta_p')$, by Lemma~\ref{lem:sampling_budget} and the union bound, we will find a solution $s_q$ for every question $q$ with probability at least 0.99. 
Suppose the found set of solutions for question $q$ is $S_q$ and all sampled solutions are $s^{(1)}, \ldots, s^{(N)}$. Then because
\begin{align*}
\nabla_\theta \log \mu_\theta(s|q) = e_{s} - \sum_{s' \in \mathcal{S}} \mu_\theta(s'|q)e_{s'}
\end{align*}

We have the policy gradient being
\begin{align*}
\mathrm{PG}_{:, q} &= \frac{1}{N} \sum_{i = 1}^N \textbf{1}[s^{(i)} \in S_q] \nabla_\theta \log \mu_\theta(s^{(i)}|q) \\
&= \frac{1}{N} \sum_{i = 1}^N \textbf{1}[s^{(i)} \in S_q] (e_{s^{(i)}} - \sum_{s' \in \mathcal{S}} \mu_\theta(s'|q)e_{s'}) \\
&= \frac{1}{N} \sum_{i = 1}^N \textbf{1}[s^{(i)} \in S_q] e_{s^{(i)}} -  \left(\frac{1}{N} \sum_{i = 1}^N \textbf{1}[s^{(i)} \in S_q]  \right)  \left(\sum_{s' \in \mathcal{S}} \mu_\theta(s'|q)e_{s'} \right).
\end{align*}

We can make two simple observations:
\begin{enumerate}
    \item For every $s \not \in S_q$, $\mathrm{PG}_{s, q} < 0$.
    \item There exists a $s^* \in S_q$ such that $\mathrm{PG}_{s^*, q} > 0$.
\end{enumerate}

Therefore, consider the updated parameters
\begin{align*}
\theta'_{s, q} = \theta_{s, q} + \eta \mathrm{PG}_{s, q}
\end{align*}
If $\eta$ is large enough, we know that $\sum_{s \in S_q} P_{\mu_{\theta'}}(s|q) \geq 0.99$. This completes the proof.
\end{proof}

\section{Additional Experimental Results }\label{app:res}

\subsection{Ablation Study without hint}\label{app:woh}

\begin{table}[!h]
\caption{Ablation without hint on \texttt{Nemotron-1.5B}: Pass@1 (avg@32) on challenging maths benchmarks. ``\MethodName-Nemotron-1.5B w/o hint'' trains RL on the same data but removes hints from the prompt, while ``w/ hint'' uses partial-solution hints during training. With hints, the model improves all benchmarks and achieves a +2.82 average gain over \emph{w/o hint} (63.26 vs. 60.44), on top of the improvements over the base model. The one using hint requires nearly half the number of steps compared to the one not using hint to achieve the same performance.
}
\begin{center}
\resizebox{\textwidth}{!}{
\begin{tabular}{c|ccccc|c}
\toprule
Model & AIME24 & AIME25 & HMMT FEB 25 & Olympiad Bench & BRUMO25 & Avg \\
\midrule
Nemotron-1.5B             & 61.77  & 49.50  & 31.56       & 64.62           & 58.23   & 53.14\\
\MethodName-Nemotron-1.5B w/o hint (2K step) & 69.48 & 59.79 & 38.85 & 68.05 & 66.04 & 60.44 \\
\MethodName-Nemotron-1.5B w/ hint (1.1K step) & 69.27 & 60.00  & 37.92 &69.72   &  68.33  & 61.05 \\
\MethodName-Nemotron-1.5B w/ hint (2K step) & \textbf{72.50}  & \textbf{62.29}  & \textbf{41.67}       & \textbf{70.36}           & \textbf{69.48}   & \textbf{63.26} \\
\bottomrule
\end{tabular}
}
\end{center}
\label{tab:abl-data-filter}
\end{table}


\begin{figure}[!h]
\centering
\includegraphics[width=\textwidth]{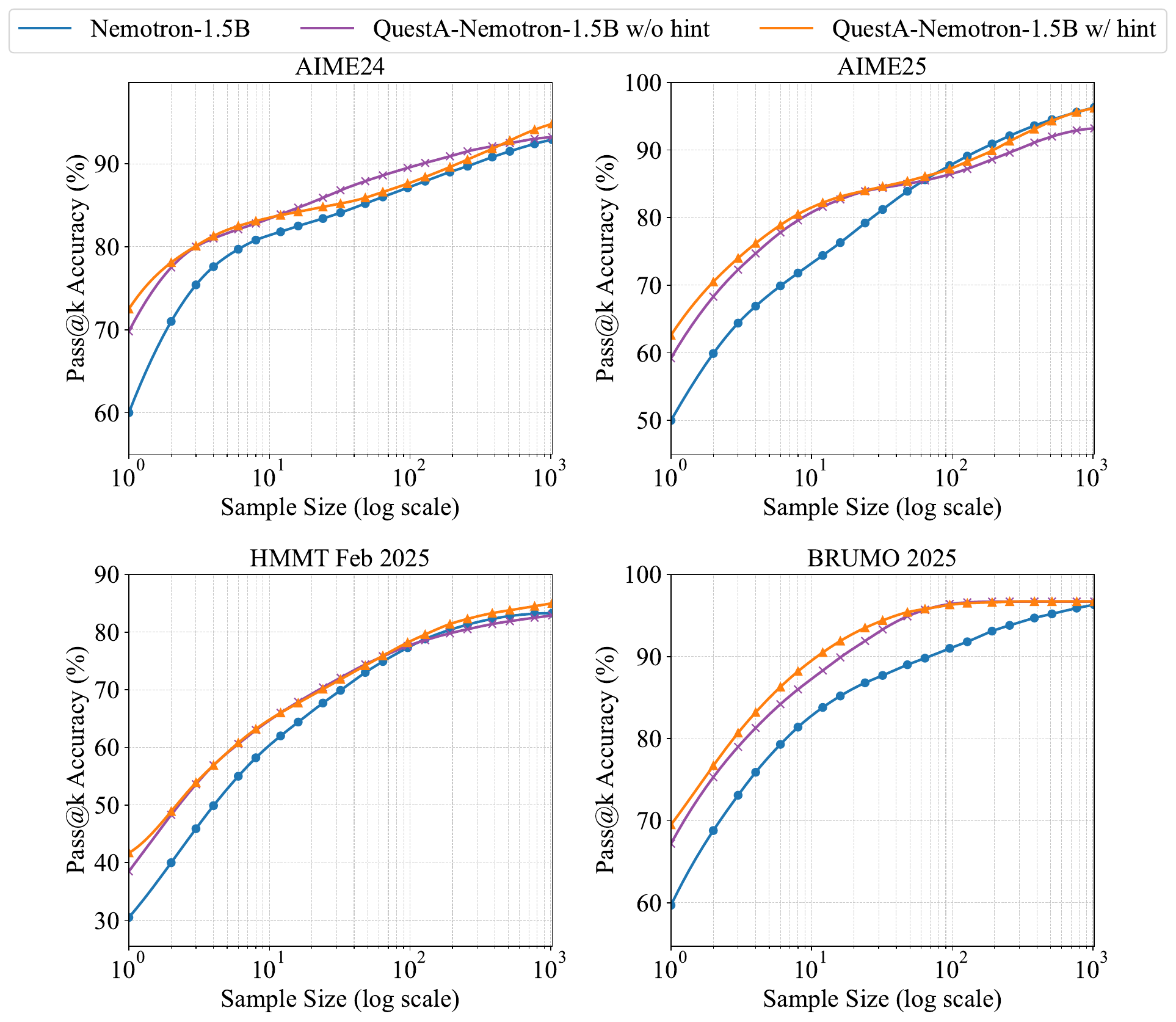}
\caption{Pass@$k$ comparison on \texttt{Nemotron-1.5B} for RL \emph{with} vs. \emph{without} hints. Training with hints consistently dominates across $k$ and avoids the performance drop at larger $k$ seen in standard RL. Hints are used only during training; evaluation uses no hints.}
\label{fig:abl-base-pass}
\end{figure}

We ablate the role of hints by training RL on \texttt{Nemotron-1.5B} with and without partial-solution hints. Here, \MethodName-Nemotron-1.5B \emph{w/o hint} denotes RL on the same data and schedule but with the hint removed from the prompt; \MethodName-Nemotron-1.5B \emph{w/ hint} uses identical settings except that the partial solution is provided as a hint during training. As summarized in Table~\ref{tab:abl-data-filter}, removing the hint still improves over the base model (average Pass@1: 53.14 \(\rightarrow\) 60.44), but adding the hint yields a further +2.82 average gain (60.44 \(\rightarrow\) 63.26), with consistent improvements across all five benchmarks. The one using hint requires nearly half the number of steps compared to the one not using hint to achieve the same performance. Note that hints are used only during training; all evaluations are conducted \emph{without} hints.

Figure~\ref{fig:abl-base-pass} compares Pass@$k$ curves. The \emph{w/ hint} model lifts the entire curve across $k$ and avoids the degradation at larger $k$ commonly observed in standard RL, while the \emph{w/o hint} variant brings smaller gains that taper off as $k$ increases. A possible reason for this phenomenon is that, without hints, extremely difficult problems remain unlearned. Consequently, during reinforcement learning training, the model prioritizes improving performance on problems that have become relatively easier as training progresses. This leads the model to become overly confident, thereby reducing its Pass@k metric. In contrast, when hints are provided, the model still prioritizes learning more difficult problems—this is because such problems can provide effective learning signals.

\subsection{Ablation Study with different models}\label{app:diff_model}

\begin{table}[!h]
    \caption{Performance comparison on \texttt{DeepScaleR-1.5B}: Pass@1 (avg@32) across maths benchmarks. \MethodName consistently improves all tasks and raises the average by +6.50 points.
    }
	\begin{center}
		\resizebox{\textwidth}{!}{
			\begin{tabular}{c|ccccc|c}
				\toprule
				Model                     & AIME24 & AIME25 & HMMT FEB 25 & Olympiad Bench  & BRUMO25 & Avg \\
				\midrule
                DeepScaleR-1.5B & 40.42 & 31.35 & 19.27 & 52.97 & 37.40 & 36.28\\
\MethodName-DeepScaleR-1.5B & 49.16 & 35.94 & 21.77 & 58.69 & 48.33 & 42.78\\
				\bottomrule
			\end{tabular}
		}
	\end{center}
    \label{tab:math_ds}
\end{table}

\begin{table}[!h]
	\caption{Performance comparison (Pass@1, averaged over 32 samples) showing the impact of \MethodName across benchmarks in other domains, including general knowledge, logic, and coding tasks. We observe minor cross-domain generalization on all these benchmarks, despite \MethodName being applied exclusively in the maths domain.
}
    \label{tab:other}
	\begin{center}
		\resizebox{\textwidth}{!}{
			\begin{tabular}{c|ccccc|c}
				\toprule
				Model & GPQA Diamond & Zebralogic & Code Contest All & Codeforces & LCB V5 202410-202502 & Avg \\
				\midrule
				DeepScaleR-1.5B & 38.5 & 14.26 & 9.07 & 8.79 & 19.57 & 18.04\\
				\MethodName-DeepScaleR-1.5B & 39.2 & 14.98 & 10.1 & 8.9 & 20.9 & 18.82\\
				\bottomrule
			\end{tabular}
		}
	\end{center}
\end{table}

\begin{figure}[!h]
\centering
\includegraphics[width=\textwidth]{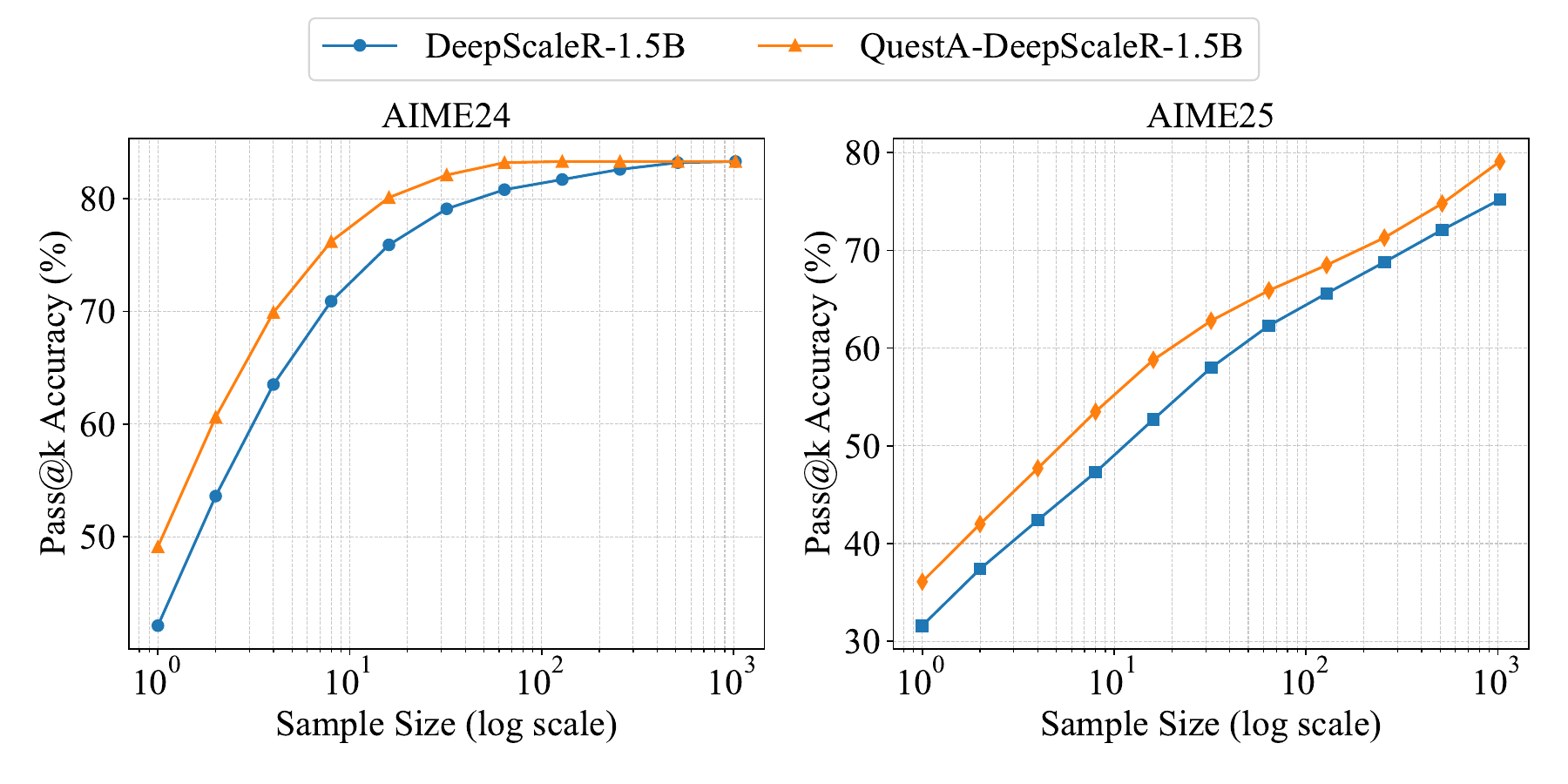}
\caption{Pass@$k$ on \texttt{DeepScaleR-1.5B}: \MethodName raises the entire curve across $k$ and avoids the large-$k$ drop often seen in standard RL. The increasing gap with $k$ indicates improved sample diversity rather than overconfident collapse. No hints are used at evaluation.}
\label{fig:abl-deepscaler}
\end{figure}

\begin{figure}[!h]
	\centering
    \includegraphics[width=\textwidth]{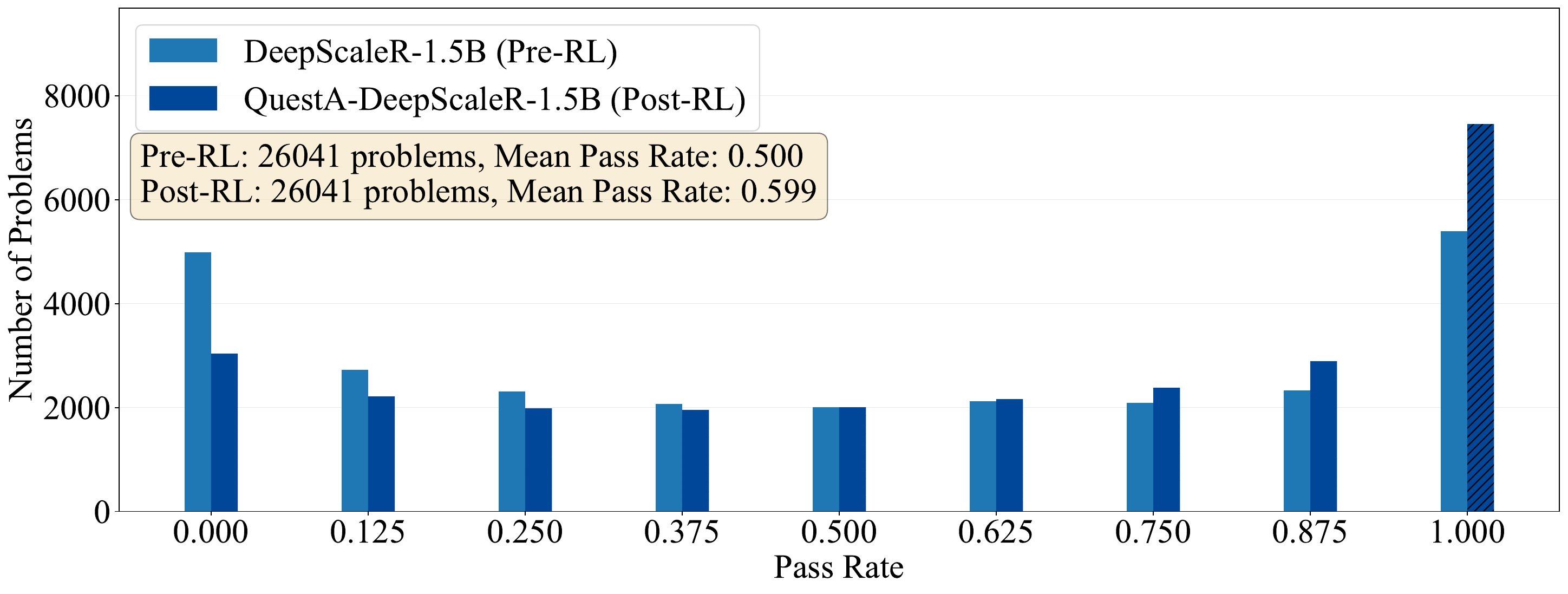}
	\caption{We conducted training of the DeepScaleR model employing the \MethodName with a dataset comprising 26,000 questions. The average pass rate was calculated from a sample of 8 instances. The initial graph represents the scenario without the incorporation of partial solutions, while the subsequent graph depicts the situation where partial solutions were included. The application of \MethodName significantly diminishes the incidence of unsolved or partially addressed problems within the training dataset. Concurrently, it has come to our attention that our previous method of data curation was not entirely accurate; in fact, the amount of data providing meaningful training signals is less abundant than anticipated, suggesting the potential for further refinement of the dataset.}
	\label{fig:dist_ds}
\end{figure}

We next examine model-family transfer by applying \MethodName to \texttt{DeepScaleR-1.5B}. We train for 750 steps on \texttt{DeepScaleR-1.5B Stage 2}~\cite{deepscaler2025} on the \MethodName first stage.

As shown in Table~\ref{tab:math_ds}, \MethodName-DeepScaleR-1.5B improves \emph{every} maths benchmark over the base model, achieved an average improvement of
6\%, indicating that the benefits of \MethodName are not tied to a single architecture.

Pass@k behavior mirrors these gains. In Figure~\ref{fig:abl-deepscaler}, \MethodName-DeepScaleR lifts the entire Pass@$k$ curve across $k$ and avoids the degradation at larger $k$ reported in standard RL settings. The widening gap at larger $k$ suggests improved candidate diversity rather than overfitting to a single trajectory, consistent with our general pass@$k$ analysis in Appendix~\ref{app:pass}. Complementing this, Figure~\ref{fig:dist_ds} shows that on the 26K training set (evaluated \emph{without} hints), mass shifts away from the 0/8--1/8 bins toward higher pass rates, reducing unsolved or partially solved cases. Hints are used only during training and are removed at evaluation time.

Beyond maths, Table~\ref{tab:other} reports out-of-distribution (OOD) results on general knowledge, logic, and coding. \MethodName-DeepScaleR-1.5B achieves small but consistent gains (Avg: 18.04\(\rightarrow\)18.82; +0.78), suggesting that the improved reasoning patterns transfer modestly beyond the training domain.

\subsection{Supplemental Experiment Detail}\label{app:sup}

\begin{figure}[!h]
\centering
\includegraphics[width=\textwidth]{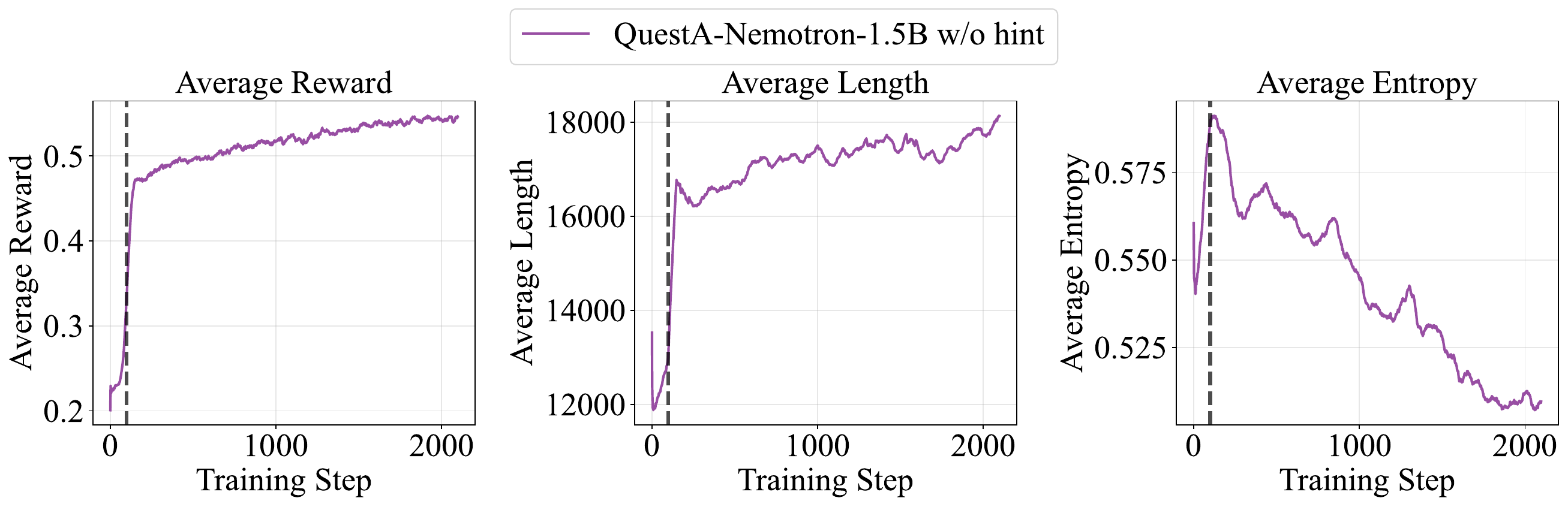}
\caption{Training dynamics of \MethodName-Nemotron-1.5B w/o hint.}
\label{fig:abl-base}
\end{figure}

\begin{figure}[!h]
\centering
\includegraphics[width=\textwidth]{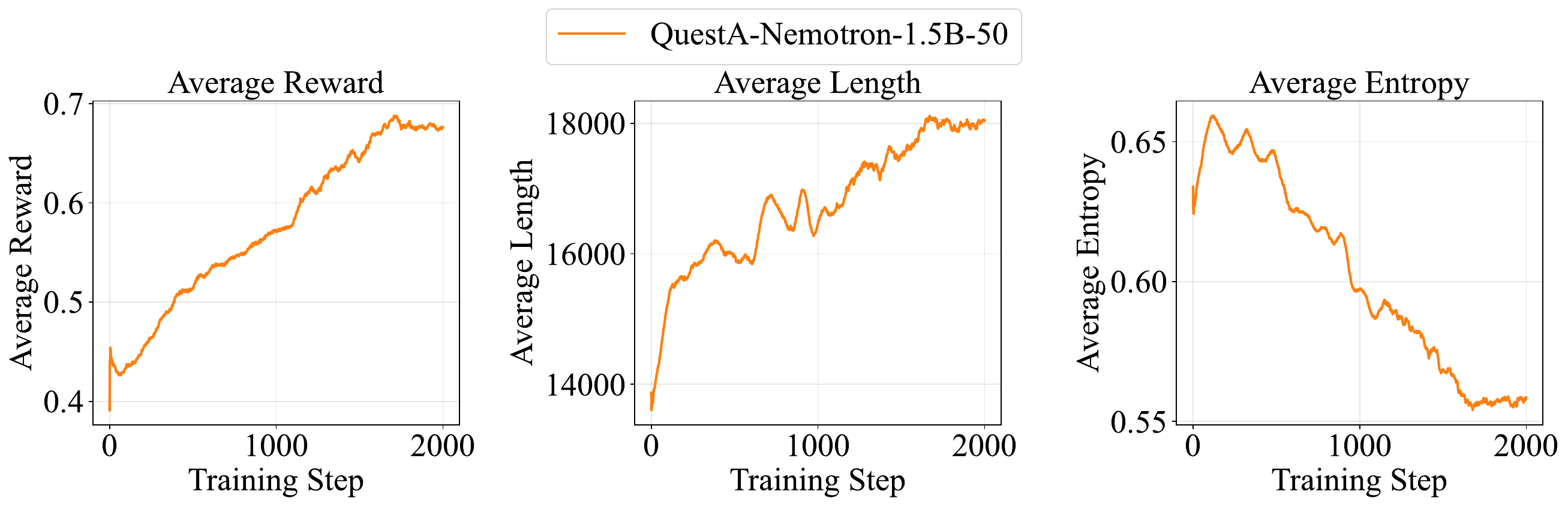}
\caption{Training dynamics of \MethodName-Nemotron-1.5B-50.}
\label{fig:train_partial_50}
\end{figure}

\begin{figure}[!h]
\centering
\includegraphics[width=\textwidth]{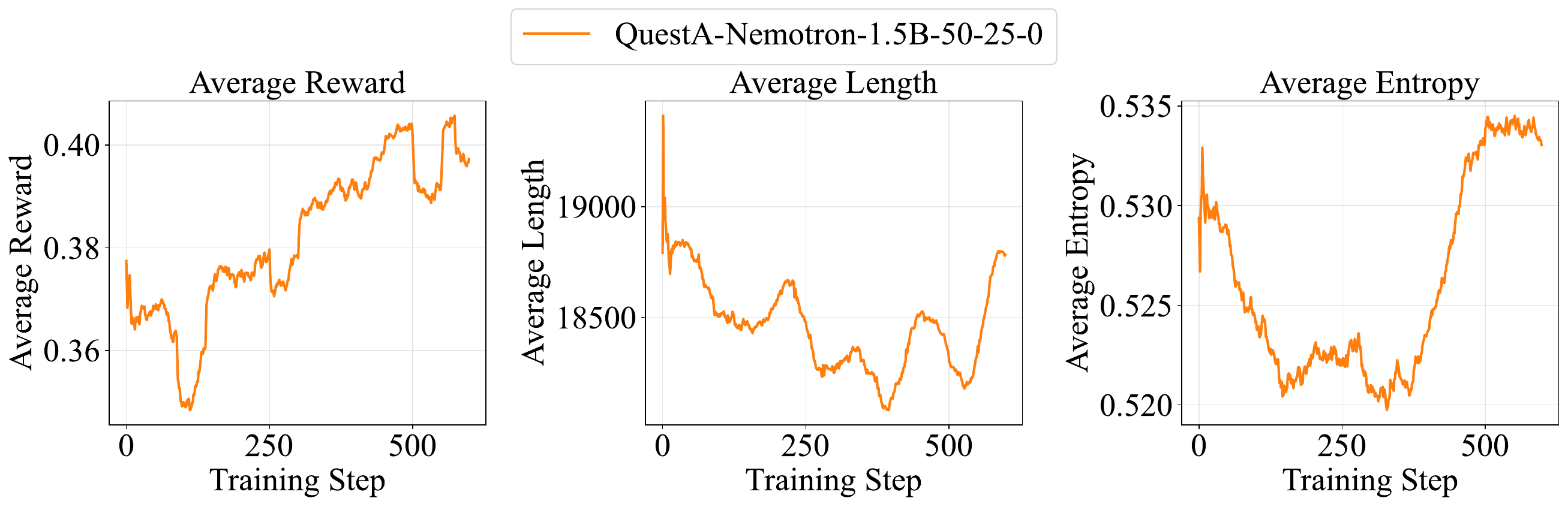}
\caption{Training dynamics of \MethodName-Nemotron-1.5B-50-25-0.}
\label{fig:abl-partial-50-25-0}
\end{figure}

\begin{figure}[!h]
\centering
\includegraphics[width=\textwidth]{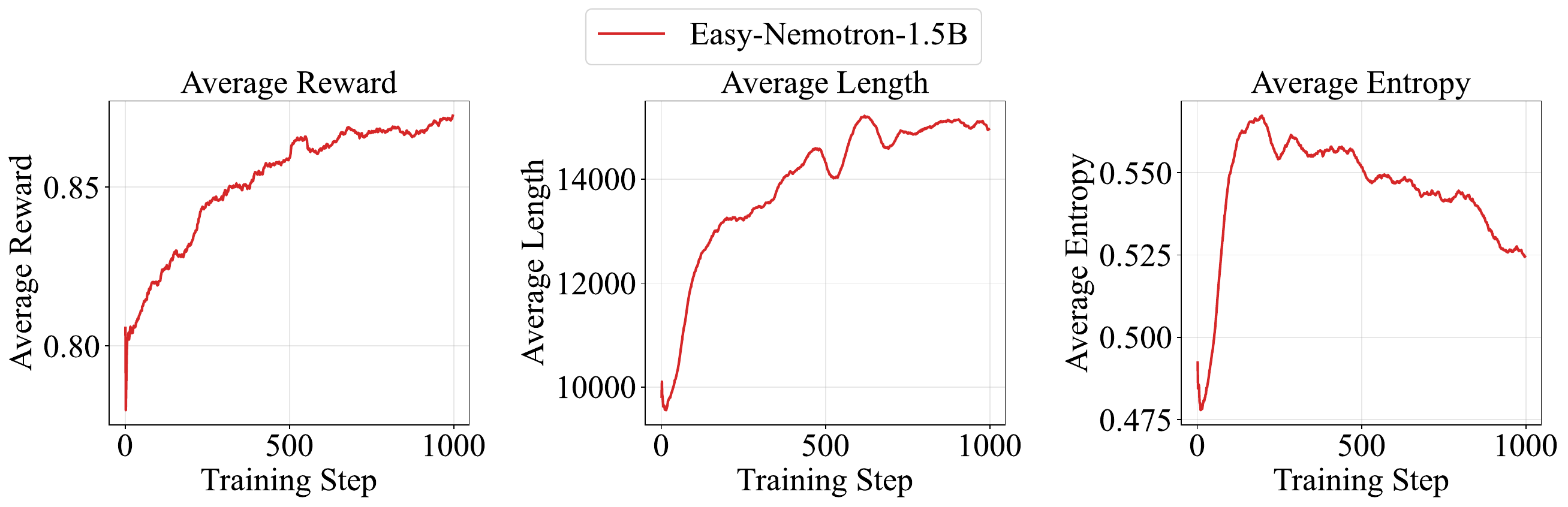}
\caption{Training dynamics of Easy-Nemotron-1.5B.}
\label{fig:abl-easy-training}
\end{figure}

\begin{figure}[!h]
\centering
\includegraphics[width=\textwidth]{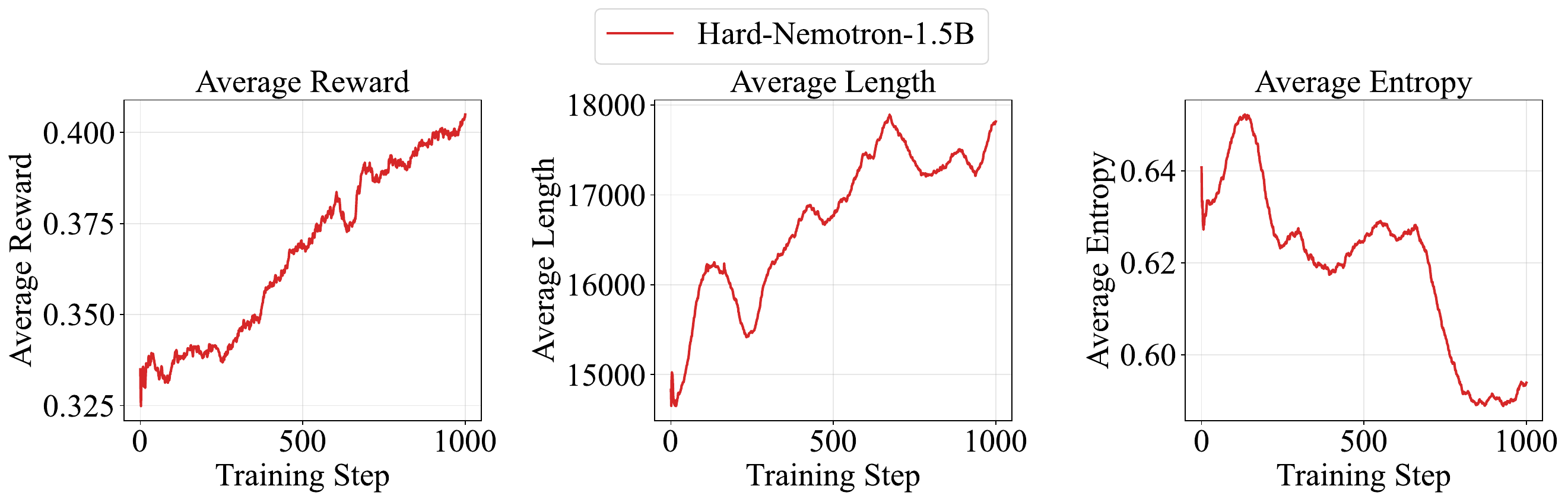}
\caption{Training dynamics of Hard-Nemotron-1.5B.}
\label{fig:abl-hard-training}
\end{figure}

\begin{figure}[!h]
	\centering
	\includegraphics[width=\textwidth]{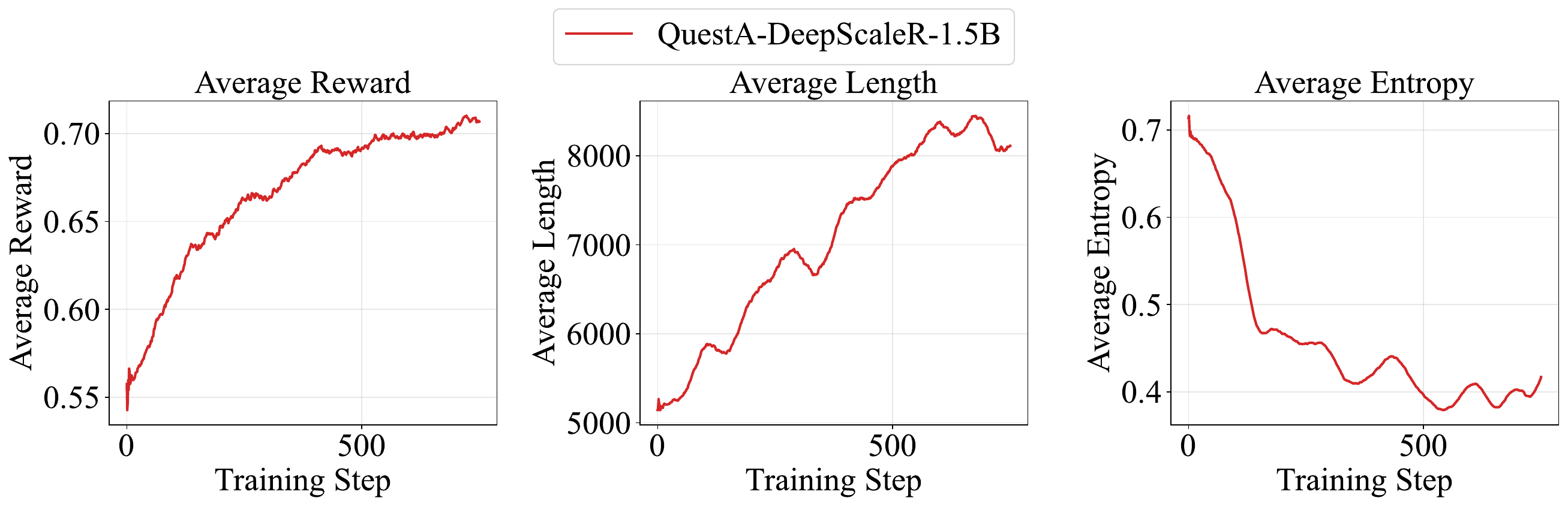}
	\caption{The training dynamics of \MethodName-DeepScaleR-1.5B. The first and second charts show the changes in the average reward and average inference length of the rollout samples that include all incorrect/correct ones, respectively. The third chart shows the average entropy excluding all incorrect/correct rollout samples.}
	\label{fig:train_ds}
\end{figure}

\begin{figure}[!h]
\centering
\includegraphics[width=\textwidth]{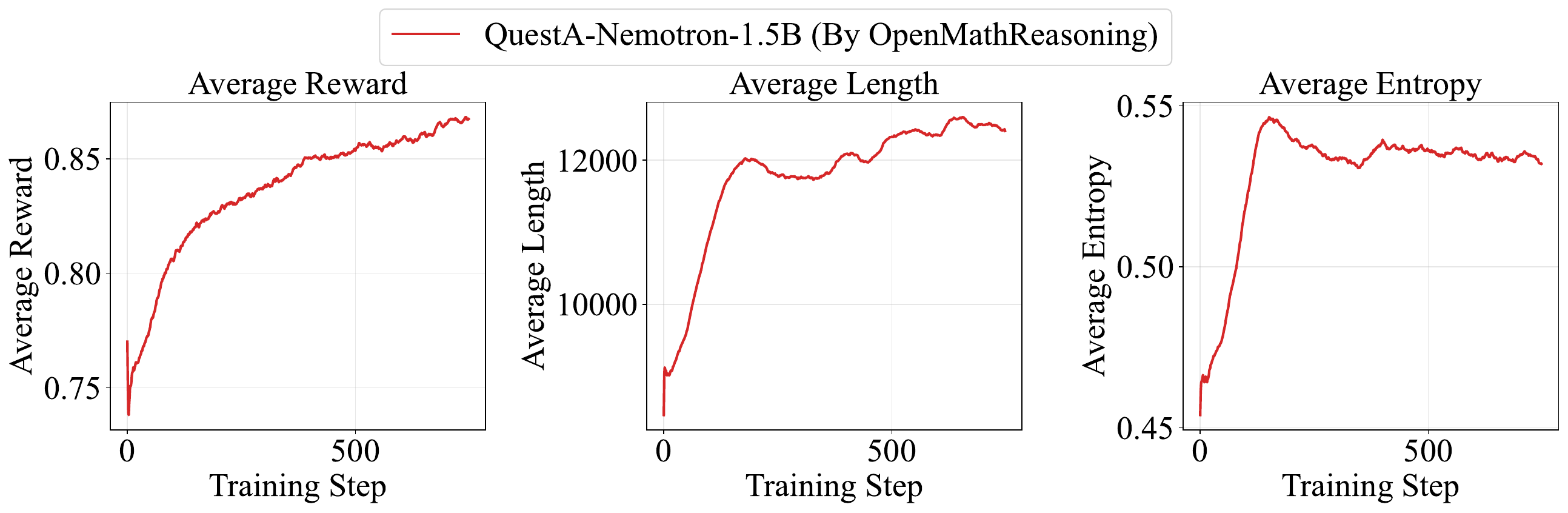}
\caption{Training dynamics of \MethodName-Nemotron-1.5B on OpenMathReasoning~\citep{moshkov2025aimo}. The first and second charts show the changes in the average reward and average inference length of the rollout samples that include all incorrect/correct ones, respectively. The third chart shows the average entropy excluding all incorrect/correct rollout samples. Dynamics closely mirror those on OpenR1-Math-220K, with no entropy collapse.}
\label{fig:train_nv_nv}
\end{figure}

\begin{figure}[!h]
	\centering
	\includegraphics[width=\textwidth]{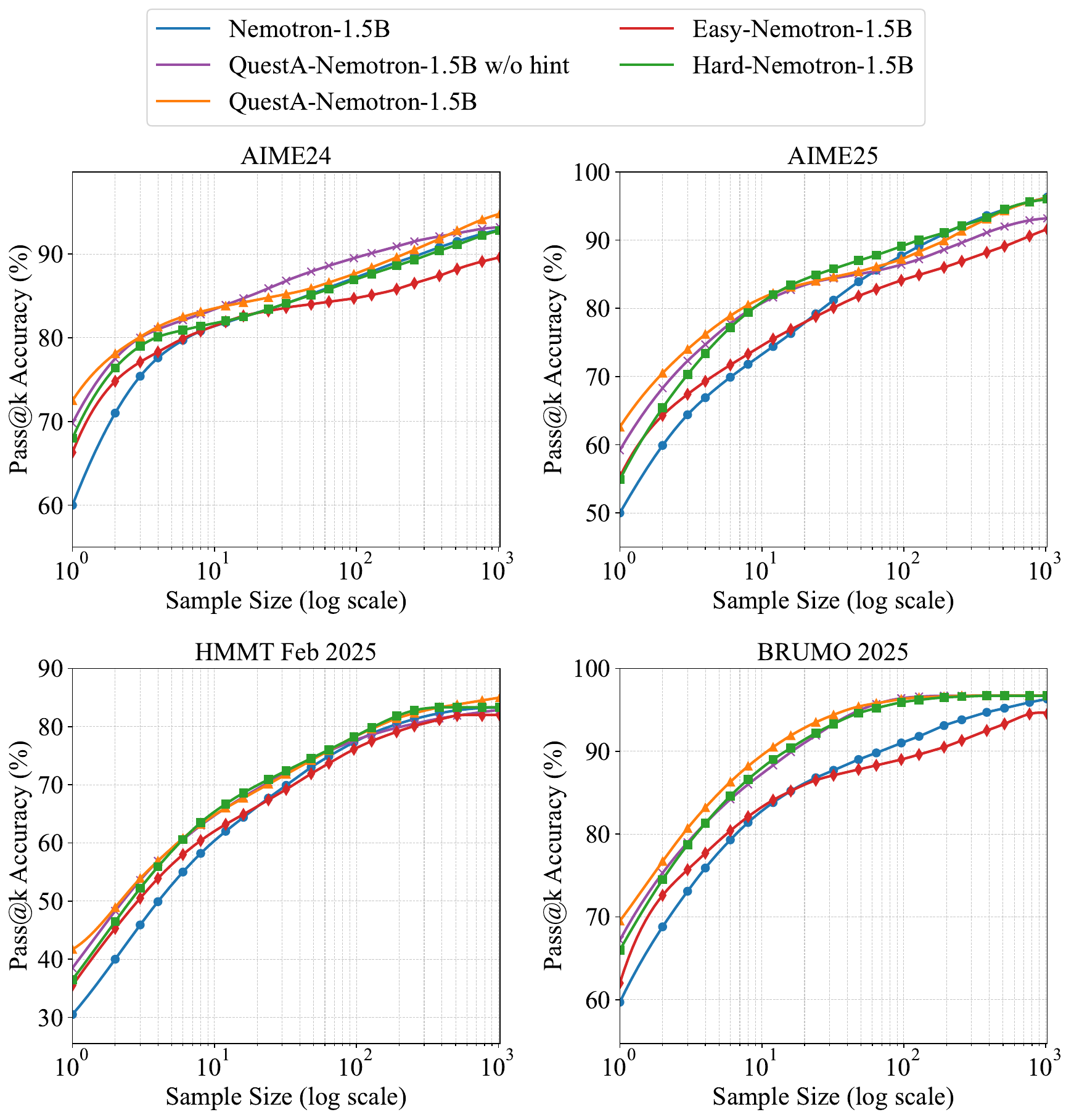}
	\caption{We compare pass@k curves of RLVR-trained models with and without \MethodName.}
    \label{fig:pass_all}
\end{figure}

\begin{figure}[!h]
	\centering
	\includegraphics[width=\textwidth]{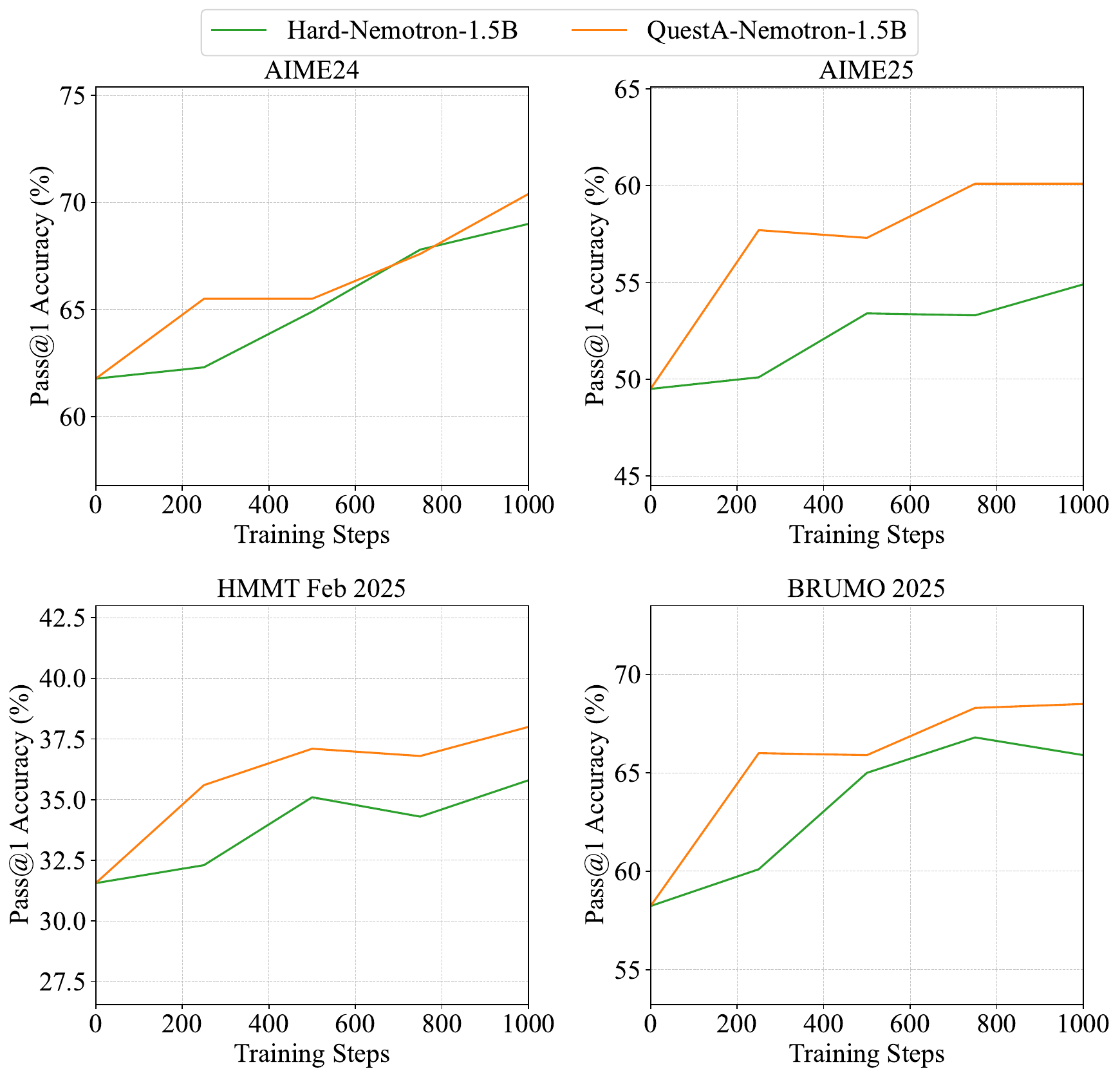}
	\caption{Comparison of RL training dynamics: Training with only hard problems (green) makes progress very slowly due to sparse rewards, while our method with partial solutions (orange) accelerates learning and consistently achieves higher accuracy across training steps.}
    \label{fig:train_eval_all}
\end{figure}

\end{document}